\documentclass[lettersize,journal]{IEEEtran}
\usepackage{amsmath,amsfonts}
\usepackage{algorithmic}
\usepackage{algorithm}
\usepackage{array}
\usepackage[caption=false,font=normalsize,labelfont=sf,textfont=sf]{subfig}
\usepackage{textcomp}
\usepackage{stfloats}
\usepackage{url}
\usepackage{verbatim}
\usepackage{graphicx}
\usepackage{cite}
\usepackage{fancyhdr}
\usepackage{booktabs}       
\usepackage{multirow}
\usepackage{cite}
\usepackage{enumerate}
\usepackage{ntheorem}
\newtheorem{theorem}{\bf Definition}
\newtheorem{lemma}{\bf Lemma}
\usepackage{xcolor} 
\usepackage{pifont}

\usepackage{hyperref}
\hypersetup{hypertex=true,
	colorlinks=true,
	linkcolor=red,
	anchorcolor=red,
	citecolor=red}
\begin{document}
	
	\title{Feature-Label Modal Alignment for Robust Partial Multi-Label Learning}
	
	\author{Yu~Chen, Weijun~Lv, Yue~Huang, Xiaozhao~Fang*, Jie~Wen,~\IEEEmembership{Senior Member,~IEEE}, Yong~Xu,~\IEEEmembership{Senior Member,~IEEE} and Guanbin~Li, ~\IEEEmembership{Member,~IEEE}
		\IEEEcompsocitemizethanks{
			\IEEEcompsocthanksitem Yu Chen, Weijun Lv, Yue Huang and Xiaozhao Fang are with the School of Automation, Guangdong University of Technology, Guangzhou 510006, China (e-mail: chenyu9265324@163.com, lvweijun0201@163.com, 17324004911@163.com, xzhfang168@126.com).
			\IEEEcompsocthanksitem Jie Wen and Yong Xu are with the Shenzhen Key Laboratory of Visual Object Detection and Recognition, Harbin Institute of Technology, Shenzhen 518055, China (e-mail: jiewen\_pr@126.com, yongxu@ymail.com).
			\IEEEcompsocthanksitem  Guanbin Li is with the School of Computer Science and Engineering, Sun Yat-Sen
			University, Guangzhou 510275, China (e-mail: liguanbin@mail.sysu.edu.cn).
			\IEEEcompsocthanksitem Corresponding author: Xiaozhao Fang (xzhfang168@126.com).
		}
	}
	
	\markboth{IEEE Transactions on Multimedia}%
	{Shell \MakeLowercase{\textit{et al.}}: A Sample Article Using IEEEtran.cls for IEEE Journals}
	
	\maketitle
	
	\begin{abstract}
		In partial multi-label learning (PML), each instance is associated with a set of candidate labels containing both ground-truth and noisy labels.  The presence of noisy labels disrupts the correspondence between features and labels, degrading classification performance.  To address this challenge, we propose a novel PML method based on feature-label modal alignment (PML-MA), which treats features and labels as two complementary modalities and restores their consistency through systematic alignment.  Specifically, PML-MA first employs low-rank orthogonal decomposition to generate pseudo-labels that approximate the true label distribution by filtering noisy labels.  It then aligns features and pseudo-labels through both global projection into a common subspace and local preservation of neighborhood structures.  Finally, a multi-peak class prototype learning mechanism leverages the multi-label nature where instances simultaneously belong to multiple categories, using pseudo-labels as soft membership weights to enhance discriminability.  By integrating modal alignment with prototype-guided refinement, PML-MA ensures pseudo-labels better reflect the true distribution while maintaining robustness against label noise.  Extensive experiments on both real-world and synthetic datasets demonstrate that PML-MA significantly outperforms state-of-the-art methods, achieving superior classification accuracy and noise robustness.
	\end{abstract}
	
	\begin{IEEEkeywords}
		Partial multi-label learning, modal alignment, noise disambiguation, class prototype.
	\end{IEEEkeywords}
	\section{Introduction}
	
	\IEEEPARstart{I}{n} traditional classification, each instance is assigned a single label. However, real-world objects often possess multiple properties simultaneously—an article may cover love, war, and patriotism; music may blend pop, rock, and rap. Multi-label learning \cite{trends} addresses this by allowing multiple labels per instance and has been applied to text classification \cite{ap1}, image recognition \cite{ap2}, and protein function prediction \cite{ap3}.
	
	Despite its effectiveness, multi-label learning requires high-quality annotations, which are expensive and difficult to obtain. Reliance on semi-automated tools or crowdsourced platforms inevitably introduces annotation errors, yielding candidate label sets containing both ground-truth and noisy labels. As shown in Fig.~\ref{fig_1}, the candidate set includes ground-truth labels ("lake," "cloud," "sun," "tree") alongside noisy labels ("mountain," "sea," "grass," "bird").
	
	\begin{figure}[t]
		\centering
		\includegraphics[width=\linewidth]{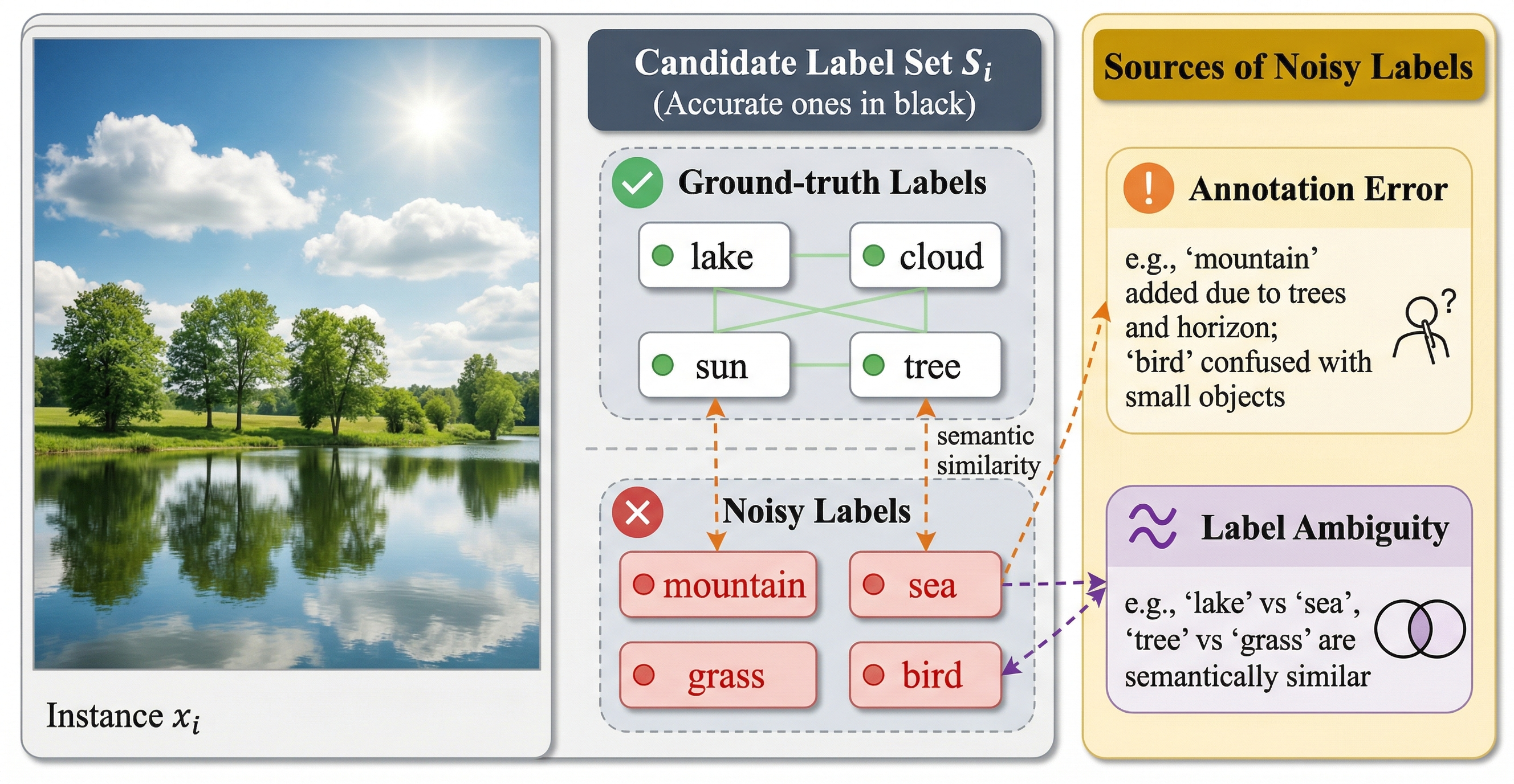}
		\caption{{An example of partial multi-label learning. The candidate label set contains both ground-truth labels (in black: "lake," "cloud," "sun," and "tree") and noisy labels (in red: "mountain," "sea," "grass," and "bird").}}
		\label{fig_1}
	\end{figure}
	
	Noisy labels arise from two primary sources. First, annotation errors occur when annotators incorrectly assign labels—in Fig.~\ref{fig_1}, "mountain" may be added due to trees and the horizon, while "bird" might be confused with small objects. Second, label ambiguity stems from semantic similarity between concepts like "lake" versus "sea" or "tree" versus "grass," leading to semantically related but incorrect labels.
	
	To address this, Xie and Huang \cite{pml-fp} introduce partial multi-label learning (PML) as a weakly supervised framework where ground-truth labels are concealed within candidate sets. Traditional multi-label algorithms such as RankSVM \cite{ranksvm2}, ML-kNN \cite{ml-knn}, and LIFT \cite{lift} lack noise identification mechanisms and perform poorly on noisy data. Subsequent PML methods fall into two-stage and end-to-end approaches \cite{lenfn}.
	
	Two-stage methods first identify ground-truth labels and then train classifiers on refined labels. For example, PARTICLE \cite{particle} employs label propagation with K-neighbor aggregation and confidence thresholding, while PENAD \cite{penad} recovers label distributions using observed logical models and shared topological structures. In contrast, end-to-end approaches jointly optimize label disambiguation and classifier training. Methods like fPML \cite{fpml} decompose candidate labels and features into low-rank matrices, PML-LRS \cite{pml-lrs} decomposes prediction model matrix into ground-truth label prediction and noise label, and PML-ND \cite{pml-nd} employs negative label correlation information to guide label propagation process to induce ground-truth labels with high credibility.
	
	Although existing methods achieve notable results, they often overlook the inherent relationship between features and labels, leading to suboptimal disambiguation under complex noise due to insufficient feature-label space alignment. {Specifically, noisy labels create a \emph{misalignment} between the feature space and label space: instances with similar features may receive dissimilar noisy label sets due to random annotation errors, and conversely, dissimilar instances may share noisy labels due to label ambiguity. This misalignment propagates through the learning pipeline, causing classifiers to learn spurious feature-label associations. Explicitly enforcing feature-label consistency provides a principled mechanism to constrain pseudo-label refinement to be coherent with the feature manifold structure, thereby filtering noise more effectively.} To address this, we propose \textbf{feature-label modal alignment}, treating features and labels as distinct yet interconnected modalities that jointly describe instance properties. Noisy labels disrupt this alignment, creating feature-label ambiguity that degrades performance. Our method explicitly aligns features and labels as complementary modalities through a systematic alignment process to restore consistency and reduce noise impact, enhancing both robustness and pseudo-label discriminability.
	
	Specifically, we leverage the subset property of ground-truth labels to perform low-rank orthogonal decomposition on candidate labels, generating pseudo-labels that approximate the true distribution. {We then project features and pseudo-labels into} a common subspace for global alignment while preserving neighborhood structures through local alignment. Finally, we learn multi-peak class prototypes that capture the multi-label characteristic where instances belong to multiple categories simultaneously. Pseudo-labels serve as soft membership weights to supervise prototype generation, with instance-prototype distances reflecting category membership degrees. This framework ensures pseudo-labels align with features and approximate ground-truth, effectively mitigating noise impact. The main contributions are summarized:
	\begin{itemize}
		\item This paper introduces the concept of modal alignment to address the PML problem, improving feature-label consistency through synergistic global and local alignment strategies, thereby enhancing classification performance.
		
		\item As a pilot study, this paper proposes a low-rank orthogonal decomposition method to reduce label ambiguity and generate more robust pseudo-labels.
		
		\item We develop a multi-peak class prototype learning mechanism that explicitly models the multi-label characteristic, enhancing pseudo-label discriminability through instance-prototype membership.
		
		\item Extensive experiments demonstrate that PML-MA significantly outperforms state-of-the-art methods with superior robustness against label noise.
	\end{itemize}
	The remainder of this article is organized as follows: Section \ref{sec2} reviews partial multi-label learning and related fields. Section \ref{sec3} presents the detailed PML-MA methodology. Section \ref{sec4} reports experimental results and analysis. Finally, Section \ref{sec5} concludes the paper.
	\section{Related Work}\label{sec2}
	This section briefly reviews the related research on partial multi-label learning, as well as the closely related fields of multi-label learning and partial label learning.
	\subsection{Multi-Label Learning}
	In multi-label learning (MLL), each example is associated with multiple labels, and various approaches have been proposed to address this challenge. One common approach is to transform MLL into multiple binary classification tasks, treating each label independently \cite{mll-tkde3}. To enhance performance, many methods have incorporated label correlations, such as pairwise and higher-order dependencies, to capture the relationships between labels. For example, classifier chains \cite{mll-tkde} and label power set \cite{mll4} methods have been used to model label co-occurrence and dependencies.
	Recent research has also integrated manifold learning with MLL to exploit the intrinsic structure of the data. Geng et al. \cite{mll5} explores the manifold structure in the label space, while Luo et al. \cite{mll6} applies low-dimensional embeddings to label information based on manifold learning and sparse feature selection. Jia et al. \cite{mll-tkde2} uses decomposed label encoding, which enables modeling alignment in an encoded label space.  
	However, traditional MLL methods assume precise labeling, which is often unrealistic. 
	\subsection{Partial Label Learning}
	Partial label learning (PLL) is a multi-class single-label task where each instance is associated with a set of candidate labels, one of which is correct and the others are noisy. The primary goal in PLL is to predict the true label from the candidate set \cite{pll-0}. A common strategy for addressing this challenge is disambiguation, which aims to recover the true label from the noisy set of candidate labels. This can be done using averaging disambiguation, which assumes that each candidate label is equally important and calculates the average output of all candidates \cite{pll-2,pll-3}, or discriminative disambiguation, where the true label is treated as a latent variable, and model parameters are iteratively refined based on an optimization criterion, such as the maximum margin \cite{pll-tkde,pll-5}. While PLL methods effectively handle noisy labels, they are typically designed for single-label prediction, making them unsuitable for multi-label tasks. 
	\subsection{Partial Multi-label Learning}
	Compared to multi-label learning (MLL) and partial label learning (PLL), partial multi-label learning (PML) is more challenging as it requires learning in imperfect environments and training high-precision classifiers. Recently, several algorithms have been developed to address PML. Currently, PML research primarily focuses on identifying ground-truth labels from the candidate label set, a process known as label disambiguation. Existing methods can be broadly categorized based on their disambiguation strategies. \textbf{(i) Low-rank and sparse decomposition approaches} separate ground-truth labels from noisy ones by exploiting structural priors, such as PML-LRS~\cite{pml-lrs} and PMFS-LRS~\cite{pmfs-lrs}. \textbf{(ii) Label propagation and enhancement methods} estimate the credibility of candidate labels through neighborhood information or distributional modeling, including PARTICLE~\cite{particle}, PENAD~\cite{penad}, PAMB~\cite{pamb}, PML-LDL~\cite{pml-ldl}, and PML-LENFN~\cite{lenfn}. \textbf{(iii) Feature-guided disambiguation approaches} leverage instance features to identify and filter noisy labels, such as fPML~\cite{fpml}, PML-NI~\cite{pml-ni}, DRAMA~\cite{drama}, and PML-PLR~\cite{pml-plr}. Among these, methods that impose correlation constraints or manifold constraints on labels using feature information have become particularly popular, including MUSER~\cite{muser}, PASAD~\cite{pasad}, PML-DNDC~\cite{pml-dndc}, and CLLFS~\cite{cllfs}. \textbf{(iv) Feature selection methods} address disambiguation by identifying discriminative features, such as PMLFS~\cite{pmlfs}, PML-FSSO~\cite{pml-fsso}, LCFS-PML~\cite{lcfs-pml}, and PMSNE~\cite{pmsne}. \textbf{(v) Correlation-based methods} exploit label correlations or cluster assignments to assist disambiguation, including PML-fp~\cite{pml-fp}, PML-SALC~\cite{pml-salc}, GLC~\cite{glc}, PML-LC~\cite{pml-lc}, and FBD-PML~\cite{fbd-pml}. \textbf{(vi) Deep learning approaches} employ neural networks to learn robust representations and reduce noise impact, such as PML-GAN~\cite{pml-gan}, PML-ED~\cite{pml-ed}, PARD~\cite{pard}, and PML-BLS~\cite{pml-bls}. 
	
	Although these methods have made significant progress, there remains 
	untapped potential in explicitly modeling the consistency between features 
	and labels, which motivates our modal alignment framework.
	
	{\subsection{Cross-View and Subspace Learning}
		Cross-view learning and subspace learning methods aim to find common representations across heterogeneous data sources. Canonical Correlation Analysis (CCA)~\cite{cca} maximizes correlations between paired views, while Partial Least Squares (PLS)~\cite{pls} seeks projections that maximize covariance. Multi-view subspace methods~\cite{multiview1,multiview2} extend these ideas by learning shared and private components across views. While our global alignment objective (Eq.~\eqref{eq4}) shares structural similarity with these methods, our framework fundamentally differs in that one modality (pseudo-labels $\mathbf{R}$) is a latent variable being jointly optimized rather than a fixed observed view. This co-optimization of projections and pseudo-labels, combined with PML-specific constraints ($0\leq r_{ij}\leq y_{ij}$), distinguishes our approach from standard cross-view learning.}
	\section{PROPOSED METHOD}\label{sec3}
	In this section, we introduce PML-MA in detail. {For clarity, the key notations used throughout this paper are summarized in Table~\ref{notation_table}.}
	\begin{table}[t]
		\centering
		{\caption{Summary of key notations.}\label{notation_table}}
		{
			\renewcommand{\arraystretch}{1.1}
			\resizebox{1\linewidth}{!}{
				\begin{tabular}{cl}
					\toprule
					Symbol & Description \\
					\hline
					$\mathbf{X}\in\mathbb{R}^{n\times d}$ & Feature matrix of $n$ instances with $d$ dimensions \\
					$\mathbf{Y}\in\{0,1\}^{n\times c}$ & Candidate label matrix with $c$ label classes \\
					$\mathbf{R}\in\mathbb{R}^{n\times c}$ & Pseudo-label matrix ($0\leq r_{ij}\leq y_{ij}$) \\
					$\mathbf{Q}\in\mathbb{R}^{c\times c}$ & Orthogonal basis matrix ($\mathbf{Q}^\top\mathbf{Q}=\mathbf{I}_c$)\\
					$\mathbf{P}_1\in\mathbb{R}^{d\times m}$ & Feature projection matrix \\
					$\mathbf{P}_2\in\mathbb{R}^{c\times m}$ & Pseudo-label projection matrix ($\mathbf{P}_2^\top\mathbf{P}_2=\mathbf{I}_m$)\\
					$\mathbf{S}\in\mathbb{R}^{n\times n}$ & Local instance similarity matrix \\
					$\mathbf{D}\in\mathbb{R}^{n\times n}$ & Scalar weight matrix with $d_{ii}=\sum_{j=1}^c r_{ij}$ \\
					$\mathbf{V}\in\mathbb{R}^{d\times c}$ & Class prototype matrix \\
					$m$ & Common subspace dimension \\
					$\lambda,\alpha,\beta,\gamma$ & Trade-off parameters \\
					\bottomrule
		\end{tabular}}}
	\end{table}
	Let $\mathbf{X}=[\mathbf{x}_1,\mathbf{x}_2,\cdots,\mathbf{x}_n]^\top\in\mathbb{R}^{n\times d}$ denotes the feature matrix of $n$ instances with $d$-dimensional features, and $\mathbf{Y}=[\mathbf{y}_1,\mathbf{y}_2,\cdots,\mathbf{y}_n]^\top\in\{0,1\}^{n\times c}$ represents the candidate label matrix with $c$ label classes. The training dataset is $\mathcal{D}=\{(\mathbf{x}_i,\mathbf{y}_i)|1\leq i\leq n\}$, where $y_{ij} = 1$ indicates the $i$-th instance is associated with the $j$-th class label, and $y_{ij} = 0$ otherwise. Each instance has a candidate label set containing noisy labels, where unrelated labels are incorrectly marked as '$1$'.
	
	The overall framework of PML-MA is illustrated in Fig.~\ref{framework}.  PML-MA consists of four synergistic components: (1) \textit{Label Low-Rank Orthogonal decomposition} ($\mathcal{L}_{LRO}$) decomposes $\mathbf{Y}$ into orthogonal basis $\mathbf{Q}$ and pseudo-labels $\mathbf{R}$ to filter noise; (2) \textit{Global Modal Alignment} ($\mathcal{L}_{GMA}$) projects features and pseudo-labels into a common subspace; (3) \textit{Local Modal Alignment} ($\mathcal{L}_{LMA}$) preserves neighborhood structures; (4) \textit{Multi-peak Class Prototype learning} ($\mathcal{L}_{MCP}$) models the multi-peak distribution of multi-label data. These components are integrated into a unified optimization framework to jointly refine pseudo-labels and train the final classifier $\mathbf{P}_2\mathbf{P}_1^\top$.
	\begin{figure*}[t]
		\centering
		\includegraphics[width=0.7\linewidth]{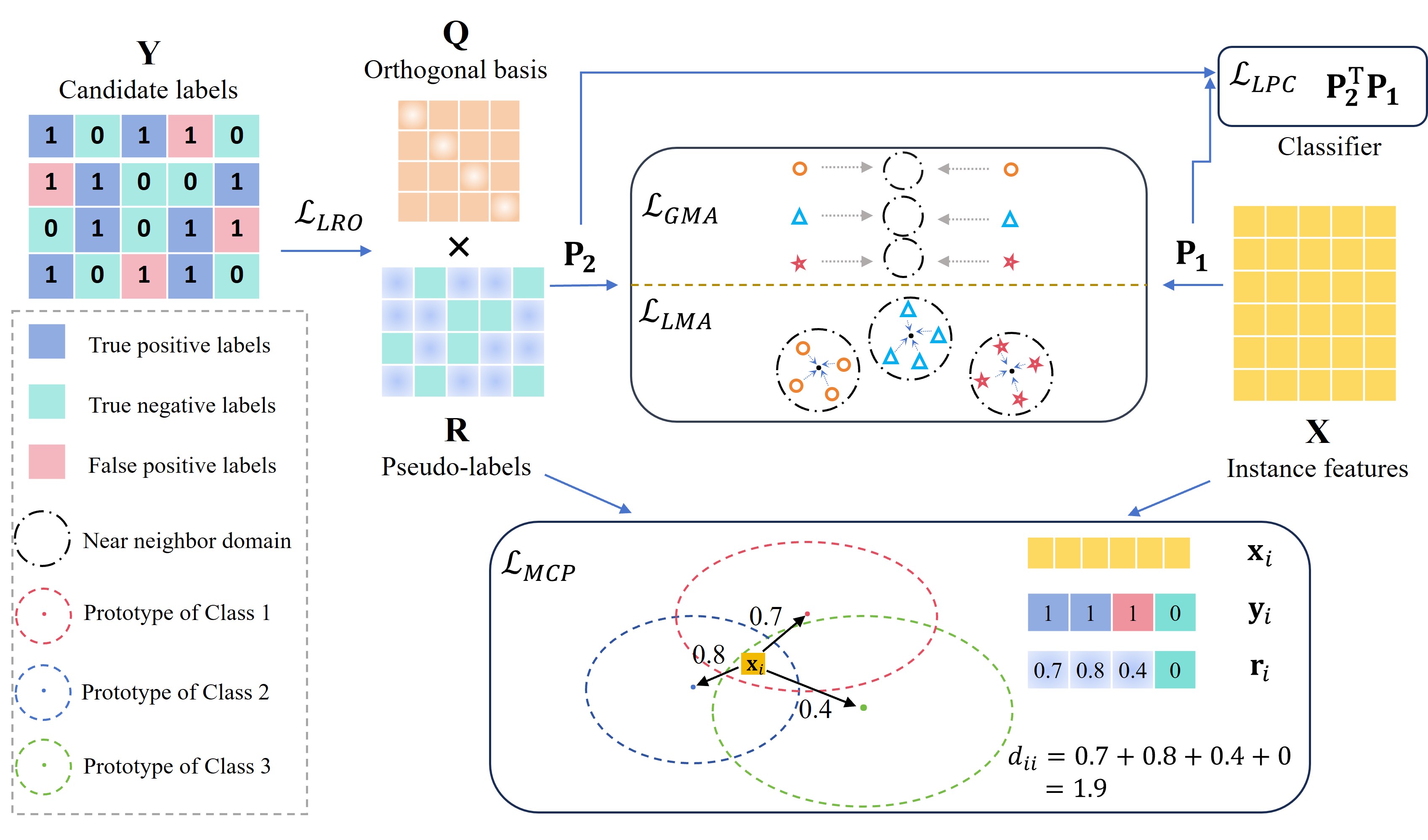}
		\caption{{Overview of the proposed PML-MA framework. The method comprises: (1) Label Low-Rank Orthogonal decomposition ($\mathcal{L}_{LRO}$) decomposes candidate labels $\mathbf{Y}$ into orthogonal basis $\mathbf{Q}$ and pseudo-labels $\mathbf{R}$; (2) Global Modal Alignment ($\mathcal{L}_{GMA}$) projects features $\mathbf{X}$ and pseudo-labels $\mathbf{R}$ into a common subspace via $\mathbf{P}_1$ and $\mathbf{P}_2$; (3) Local Modal Alignment ($\mathcal{L}_{LMA}$) preserves neighborhood structures using similarity relationships; (4) Multi-peak Class Prototype learning ($\mathcal{L}_{MCP}$) models the multi-peak distribution where instance $\mathbf{x}_i$ with pseudo-labels $\mathbf{r}_i=[0.7, 0.8, 0.4, 0]$ (yielding $d_{ii}=1.9$) is guided by multiple class prototypes. The final classifier $\mathbf{P}_2\mathbf{P}_1^\top$ is derived from the aligned projection matrices.}}
		\label{framework}
	\end{figure*}
	
	\subsection{Label Low-Rank Orthogonal Decomposition}
	In partial multi-label learning, candidate labels $\mathbf{Y}$ contain noise introduced through labeling errors. Assuming a ground-truth label matrix $\mathbf{G}\in\mathbb{R}^{n\times c}$ exists as a subset of $\mathbf{Y}$, the mainstream approach models this as:
	\begin{equation}\label{eq1}
		\min_{\mathbf{G},\mathbf{N}}~~\|\mathbf{G}\|_*+\|\mathbf{N}\|_1,\quad{s.t.}\quad \mathbf{Y}=\mathbf{G}+\mathbf{N},
	\end{equation}
	where $\mathbf{N}\in\mathbb{R}^{n\times c}$ represents the noise matrix. This assumes $\mathbf{G}$ is low-rank due to label correlations and $\mathbf{N}$ is sparse as noise occurs locally. However, unsupervised noise identification struggles with complex noise distributions.
	
	To address this, we propose orthogonal decomposition that projects $\mathbf{Y}$ into a space emphasizing signal over noise. {Unlike Eq.~\eqref{eq1} which models exact additive decomposition, we adopt a relaxed approximation framework where} we decompose $\mathbf{Y}$ into an orthogonal basis matrix and a pseudo-label matrix:
	\begin{equation}\label{eq2}
		{\mathbf{Y}\simeq\mathbf{R}\mathbf{Q}^\top,}
	\end{equation}
	where $\mathbf{Q}\in\mathbb{R}^{c\times c}$ captures label structure while reducing noise through decorrelation, and $\mathbf{R}\in\mathbb{R}^{n\times c}$ approximates the true label distribution. We impose a low-rank constraint on $\mathbf{R}$ to capture global label correlations and constrain $\mathbf{R}$ to be a subset of $\mathbf{Y}$:
	\begin{equation}\label{eq3}
		\begin{aligned}
			&\mathcal{L}_{LRO}=\min_{\mathbf{Q},\mathbf{R}}~~\|\mathbf{Y}-\mathbf{R}\mathbf{Q}^\top\|_F^2+\lambda\|\mathbf{R}\|_*\\
			&{s.t.}~~\mathbf{Q}^{\top}\mathbf{Q}=\mathbf{I}_{c},~0\leq r_{ij}\leq y_{ij}, \forall i,j,
		\end{aligned}
	\end{equation}
	where $\lambda$ controls the low-rank regularization. The orthogonality constraint ensures $\mathbf{Q}$ preserves independence across projected dimensions, while $0\leq r_{ij}\leq y_{ij}$ filters noise by restricting pseudo-labels within candidate label bounds, enabling effective discrimination between ground-truth and noise by leveraging label correlations in a controlled manner.
	
	{The orthogonality constraint $\mathbf{Q}^\top\mathbf{Q}=\mathbf{I}_c$ offers several advantages over conventional sparse decomposition (Eq.~\eqref{eq1}): (1)~\textit{Noise decorrelation}: Orthogonal projection maps correlated label noise into independent dimensions, preventing noise propagation across labels---this is particularly important in PML where annotation errors often exhibit inter-label correlations due to semantic similarity; (2)~\textit{Stable reconstruction}: Since $\mathbf{R}\mathbf{Q}^\top\mathbf{Q}=\mathbf{R}$, the decomposition preserves pseudo-label fidelity without information loss; (3)~\textit{Computational stability}: Unlike sparse optimization that requires iterative thresholding with convergence sensitivity, orthogonal projection via SVD is numerically stable. A formal proof that $\mathbf{R}^t$ approximates the ground-truth label matrix is provided in Appendix B .}
	\subsection{Feature-Label Global and Local Modal Alignment}
	To enhance pseudo-label accuracy, we align features $\mathbf{X}$ and pseudo-labels $\mathbf{R}$ as two modalities describing the same instances. We project them into a common subspace via projection matrices $\mathbf{P}_1$ and $\mathbf{P}_2$, ensuring consistency to capture intrinsic feature-label connections and guide pseudo-label refinement. Global alignment achieves overall consistency, while local alignment preserves neighborhood structures.
	Global alignment is responsible for general narrowing, while local alignment is responsible for refined narrowing. The following is a detailed introduction:
	
	\textbf{Global Modal Alignment.} We aim to minimize the distance between instance features and pseudo-labels in the common subspace, ensuring their overall consistency. Considering the different modal characteristics of feature matrix $\mathbf{X}\in\mathbb{R}^{n\times d}$ and pseudo-label matrix $\mathbf{R}\in\mathbb{R}^{n\times c}$, we introduce projection matrices $\mathbf{P}_1\in\mathbb{R}^{d\times m}$ and $\mathbf{P}_2\in\mathbb{R}^{c\times m}$ to transform $\mathbf{X}$ and $\mathbf{R}$ respectively, making them as close as possible in the $m$-dimensional common subspace (where $m\leq \min(d,c)$). The global alignment objective is:
	\begin{equation}\label{eq4}
		\mathcal{L}_{GMA}=\min_{\mathbf{P}_1,\mathbf{P}_2,\mathbf{R}}~~
		{\|\mathbf{X}\mathbf{P}_1-\mathbf{R}\mathbf{P}_2\|_F^2}.
	\end{equation}
	This objective ensures alignment of the two modalities globally by minimizing the Euclidean distance between features and pseudo-labels in the common subspace. Through global alignment, the information bias caused by inconsistency between pseudo-labels and features can be mitigated, enhancing the expressiveness and mapping ability of pseudo-labels.
	
	
	
	{\textit{Remark (Distinction from CCA/PLS):} While Eq.~\eqref{eq4} resembles the objectives of CCA (maximizing correlation $\max_{\mathbf{w}_1, \mathbf{w}_2} \frac{\mathbf{w}_1^\top \mathbf{X}_1^\top \mathbf{X}_2 \mathbf{w}_2}{\|\mathbf{X}_1\mathbf{w}_1\| \|\mathbf{X}_2\mathbf{w}_2\|}$) or PLS (maximizing covariance $\max_{\mathbf{w}_1, \mathbf{w}_2} \mathbf{w}_1^\top \mathbf{X}_1^\top \mathbf{X}_2 \mathbf{w}_2$). CCA and PLS align fixed observed views. In contrast, our pseudo-label matrix $\mathbf{R}$ is a latent variable being jointly optimized. The co-optimization of projections $\mathbf{P}_1,\mathbf{P}_2$ and $\mathbf{R}$, combined with the PML-specific constraint $0\leq r_{ij}\leq y_{ij}$, transforms the objective into a noise disambiguation mechanism.}
		
	\textbf{Local Modal Alignment.} While global alignment ensures overall consistency, it does not explicitly preserve local structures among similar instances. Therefore, we introduce local alignment to preserve neighborhood relationships in both modalities, enhancing robustness against local noise. We utilize a local instance similarity matrix $\mathbf{S}\in\mathbb{R}^{n\times n}$ to represent neighborhood similarity relationships, where $s_{ij}$ denotes the similarity between instance $i$ and instance $j$. We compute $\mathbf{S}$ using a Gaussian kernel function:
	\begin{equation}\label{eq5}
		s_{ij}=\begin{cases}{exp(\frac{-\|\mathbf{x}_i-\mathbf{x}_j\|_2^2}{\sigma^2}),~~\mathbf{x}_j\in\mathcal{N}(\mathbf{x}_i)~\text{or}~\mathbf{x}_i\in\mathcal{N}(\mathbf{x}_j),}\\{0,~~~\text{otherwise},}\end{cases}
	\end{equation}
	where $\mathcal{N}(\mathbf{x}_i)$ denotes the set of $k$ nearest neighbors of instance $\mathbf{x}_i$, and $\sigma = \frac{1}{n} \sum_{i=1}^{n} \|\mathbf{x}_i - \mathbf{x}_i^{(k)}\|_2$ controls similarity decay. The local alignment objective ensures similar instances remain close in both projected spaces, thereby maintaining neighborhood consistency:
	\begin{equation}\label{eq6}
		\mathcal{L}_{LMA}=\min_{\mathbf{P}_1,\mathbf{P}_2,\mathbf{R}}~~
		\sum_{i=1}^{n}\sum_{j=1}^{n}
		s_{ij}\|\mathbf{P}_1^\top\mathbf{x}_{i}-\mathbf{P}_2^\top \mathbf{r}_{j}\|_{2}^{2}.
	\end{equation}	
	
	{\textit{Remark (Distinction from graph regularization):} Standard graph regularization imposes within-modal smoothness (e.g., $\sum_{ij}s_{ij}\|\mathbf{r}_i-\mathbf{r}_j\|^2$), operating in a single space. In contrast, our local alignment in Eq.~\eqref{eq6} enforces \emph{cross-modal} neighborhood consistency by constraining projected features $\mathbf{P}_1^\top\mathbf{x}_i$ to be close to projected pseudo-labels $\mathbf{P}_2^\top\mathbf{r}_j$ for similar instances via two distinct projection matrices. Combined with global alignment, this creates a cross-modal framework that simultaneously addresses distribution matching and structural preservation.}	
	
	Combining the global alignment objective in Eq. \eqref{eq4} and the local alignment objective in Eq. \eqref{eq6}, and imposing an orthogonality constraint on $\mathbf{P}_2$ to eliminate redundancy and enhance robustness, we obtain:
	\begin{equation}\label{eq7}
		\mathcal{L}_{GLMA}=\mathcal{L}_{GMA}+\alpha\mathcal{L}_{LMA},
	\end{equation}
	where $\alpha$ is a trade-off parameter.
	
	\subsection{Multi-Peak Multi-Label Class Prototype Learning}
	To further improve the discriminability of pseudo-labels, we incorporate multi-label class prototype learning that explicitly models the multi-peak nature of multi-label data. Traditional k-means clustering assumes single-peak distributions, assigning each instance to exactly one cluster via:
	\begin{equation}\label{eq8}
		\mathcal{J}_{k-means}\left({\mathbf{V}}\right)=\sum_{j=1}^{c}\sum_{{x}_{i}\in\mathcal{P}_{j}}\left\|\mathbf{x}_{i}-\mathbf{v}_{j}\right\|_{2}^{2},
	\end{equation}
	where $\mathbf{V}=[\mathbf{v}_1,\cdots,\mathbf{v}_c]^\top$ are the class prototypes, $\mathbf{v}_j$ is the centroid of cluster $\mathcal{P}_j$, and $\left\|\mathbf{x}_i - \mathbf{v}_j\right\|_2^2$ represents the squared Euclidean distance between instance $\mathbf{x}_i$ and class prototype $\mathbf{v}_j$. This can be optimized using the Expectation-Maximization (EM) algorithm:
	\begin{equation}\label{eq9}
		\min_{\mathbf{V},\mathbf{L}}\sum_{i=1}^{n}\sum_{j=1}^{c}l_{ij}\|\mathbf{x}_i-\mathbf{v}_j\|_{2}^{2}, ~~ \text{s.t.}~ \mathbf{L}\in\{0,1\}^{n\times c}, ~\sum_{j=1}^{c}l_{ij}=1, ~\forall i,
	\end{equation}
	where $\mathbf{L}\in\mathbb{R}^{n\times c}$ is the class indicator matrix with $l_{ij}=1$ indicating that instance $\mathbf{x}_{i}$ belongs to cluster $\mathcal{P}_j$, and $l_{ij}=0$ otherwise. The constraint $\sum_{j=1}^{c}l_{ij}=1$ enforces single-peak distributions, which is inadequate for multi-label learning where instances simultaneously belong to multiple categories.
	
	In multi-label settings, candidate labels $\mathbf{Y}\in\{0,1\}^{n\times c}$ satisfy $\sum_{j=1}^{c}y_{ij}\geq 1$, exhibiting multi-peak distributions. We introduce a { scalar weight matrix}  $\mathbf{D}\in\mathbb{R}^{n\times n}$ with diagonal elements $d_{ii}=\sum_{j=1}^{c}y_{ij}$ representing the number of labels (peaks) associated with instance $\mathbf{x}_i$. This leads to a reformulated clustering objective in a label-weighted feature space:
	\begin{equation}\label{eq10}
		\min_{\mathbf{V}}\sum_{i=1}^{n}\left\|\mathbf{x}_id_{ii}-\sum_{j=1}^{c}y_{ij}\mathbf{v}_j\right\|_{2}^{2}.
	\end{equation}
	Here, $\mathbf{x}_i d_{ii}$ represents an aggregated feature representation scaled by the instance's label complexity, while $\sum_{j=1}^{c}y_{ij}\mathbf{v}_j$ denotes the composite prototype formed by summing all associated class centers. The optimization encourages the scaled instance to align with the sum of its relevant prototypes, effectively encoding the multi-peak structure through additive composition. For example, if $d_{ii}=3$ with $y_{i1}=y_{i2}=y_{i3}=1$, the objective minimizes $\|3\mathbf{x}_i - (\mathbf{v}_1+\mathbf{v}_2+\mathbf{v}_3)\|_2^2$, modeling the instance as a balanced combination of three class prototypes.
	
	To enhance the quality of pseudo-labels, we extend this framework to the pseudo-label matrix $\mathbf{R}$ as membership supervision information:
	\begin{equation}\label{eq11}
		\mathcal{L}_{MCP}=\min_{\mathbf{R},\mathbf{V}}\sum_{i=1}^{n}\left\|\mathbf{x}_id_{ii}-\sum_{j=1}^{c}r_{ij}\mathbf{v}_j\right\|_{2}^{2},
	\end{equation}
	where $d_{ii}=\sum_{j=1}^{c}r_{ij}$. Unlike the binary candidate labels in Eq.~\eqref{eq10}, the pseudo-labels $\mathbf{r}_i$ consist of continuous values in $[0,1]$, enabling a soft encoding of multi-peak memberships with varying intensities. This allows $d_{ii}$ to be non-integer, representing the total membership weight across all class centers. {Note that in the alternating optimization (Algorithm 1), the { scalar weight matrix} $\mathbf{D}$ is updated \emph{before} $\mathbf{R}$ in each iteration: given $\mathbf{R}^t$ from the previous iteration, we first compute $d_{ii}^{t+1}=\sum_{j=1}^c r_{ij}^t$, then optimize $\mathbf{R}^{t+1}$ with $\mathbf{D}^{t+1}$ held fixed. This sequential update avoids circular dependency and ensures stable convergence, as empirically validated in Fig.~\ref{fig_5}.} By jointly optimizing class prototypes $\mathbf{V}$ and pseudo-labels $\mathbf{R}$ through multi-peak modeling, we enforce that each instance's feature representation harmonizes with its associated label distribution in a geometrically meaningful way. This strategy enhances pseudo-label discriminability by leveraging prototype-guided learning while respecting the intrinsic multi-peak structure of multi-label data, thereby reducing sensitivity to noisy labels and improving overall robustness.

	\subsection{Linear Prediction Classifier}	
	Our ultimate goal is to induce a multi-label predictor $f: \mathbf{X} \mapsto [0, 1]^c$, which can assign an appropriate set of labels to unseen instances. We impose an orthogonality constraint on $\mathbf{P}_2$ in Eq. \eqref{eq7} and define the final linear classifier as: $\mathbf{P}_2\mathbf{P}_1^\top$, the loss function is:
	\begin{equation}\label{lpc}
		\mathcal{L}_{LPC}=\min_{\mathbf{P}_1,\mathbf{P}_2}\|\mathbf{P}_1\mathbf{P}_2^\top \|_{F}^{2}, \quad\text{s.t.}\quad\mathbf{P}_2^{\top}\mathbf{P}_2=\mathbf{I}_{m}.
	\end{equation} 
	The model's prediction output is:
	\begin{equation}
		f(\mathbf{x}_z) = \mathbf{x}_z\mathbf{P}_1\mathbf{P}_2^\top.
	\end{equation} 
	{Here, $\mathbf{P}_1\in\mathbb{R}^{d\times m}$ and $\mathbf{P}_2\in\mathbb{R}^{c\times m}$, so the classifier $\mathbf{P}_1\mathbf{P}_2^\top\in\mathbb{R}^{d\times c}$ maps a $d$-dimensional feature vector to a $c$-dimensional continuous output in $\mathbb{R}^c$. For the prediction, a threshold of 0.5 is set to generate a binary prediction $\hat{y}_j = \mathbb{I}[f(\mathbf{x}_z)_j > 0.5]$.}
	
	\subsection{Overall Objective Function}	
	\label{Overall}
	Finally, by combining Eq. \eqref{eq3}, Eq. \eqref{eq7}, and Eq. \eqref{eq11}, and adding regularization Eq. \eqref{lpc} to the classifier to control its complexity, we obtain the final overall objective function:
	\begin{equation}\label{eq12}
		\mathcal{L} = 	\mathcal{L}_{LRO}+\mathcal{L}_{GLMA}+\beta\mathcal{L}_{MCP}+{\gamma\mathcal{L}_{LPC}}
	\end{equation}
	where $\beta$ is a trade-off parameter for controlling the learning of class prototypes, $\gamma$ is a trade-off parameter controlling the complexity of the classifier. For optimization and theoretical proof, please refer to Appendix A and B.
	\section{Experiments}\label{sec4}
	\subsection{Experimental setup}
	\subsubsection{Datasets}
	To evaluate the generalization performance of PML-MA, we conduct experiments on 13 datasets: 4 real-world PML datasets \footnote{http://palm.seu.edu.cn/zhangml/} and 9 MLL datasets \footnote{http://mulan.sourceforge.net/datasets.html} . Consistent with prior PML works \cite{pamb,lenfn}, we create 26 synthetic PML configurations by adding varying levels of random label noise to the 9 MLL datasets. For real-world datasets, candidate labels are provided by ordinary annotators and ground-truth labels are verified by professional labelers \cite{huiskes2008mir,trohidis2008multi,briggs2012acoustic,snoek2006challenge}. Detailed dataset statistics are provided in Appendix C.
	
	{\subsubsection{Implementation Details}
		Five popular multi-label metrics \cite{zhang2013review}—\textit{Hamming loss}, 
		\textit{Ranking loss}, \textit{One-error}, \textit{Coverage}, and \textit{Average precision}—are used for evaluation. The first four metrics favor lower values while \textit{Average precision} favors higher values.
		All experiments use 10-fold cross-validation on MATLAB R2024a, which is a standard protocol widely adopted in PML literature~\cite{particle,pamb,lenfn} and ensures sufficient training data while providing reliable performance estimates. The same 10-fold splits are used consistently across all compared methods to ensure fair comparison. For PML-MA, hyperparameters ($\lambda$, $\alpha$, $\beta$, $\gamma$) are tuned via grid search within $\{10^{-4}, 10^{-3}, 10^{-2}, 10^{-1}, 10^{0}, 10^{1}\}$ on the training folds. The neighborhood size for similarity matrix $\mathbf{S}$ is set to $k=5$, the common subspace dimension is $m=\min(d,c)$, and the convergence tolerance is $\epsilon=10^{-3}$ with maximum iterations $T_{\max}=50$. For all baselines, we use the hyperparameter settings recommended in their original publications. For the complete code of PML-MA, please refer to \footnote{http://github.com/CcAmbiguous/PML-MA}. }
	
	\begin{table*}[!t]
		\centering
		\caption{{Comparison} of PML-MA with other state-of-the-art PML algorithms on {\itshape \textbf{{HAMMING LOSS}}} (mean$\pm$std), where the best experimental performance (the smaller the better) is shown in boldface and the suboptimal is shown with an underscore.}
		\label{HAMMING LOSS}
		\setlength{\abovecaptionskip}{0.1cm}
		\setlength{\belowcaptionskip}{0.1cm}
		\renewcommand{\arraystretch}{1}
		\resizebox{1\linewidth}{!}{{
			\begin{tabular}{ccccccccccc}
				\toprule
				{Data Set} &{avg.\#CLs} &{PML-MA} &{PML-PLR} &{FBD-PML} &{PML-ND} &{P-LENFN} &{PAMB} &{PML-NI}
				&{PARTICLE} &{PML-fp} \\ \hline
				Mirflickr
				& 3.35
				&\textbf{.169±.004}&.171±.003&\underline{.170±.003}&.185±.004&.173±.004&.171±.032&.224±.006&.174±.037&.216±.002 \\
				Music-emotion
				& 5.29
				&\textbf{.208±.004}&\underline{.210±.001}&.212±.004&.211±.005&.213±.004&.210±.003&.254±.009&.221±.004&.238±.017 \\
				Music-style
				& 6.04
				&\textbf{.114±.006}&\underline{.115±.003}&.117±.006&.116±.005&.116±.005&.115±.004&.159±.012&.125±.004&.166±.023 \\
				YeastBP
				& 18.84
				&\textbf{.033±.002}&{.034±.002}&.035±.002&.076±.003&.041±.002&.034±.007&.037±.002&\underline{.026±.012}&.040±.002 \\ \hline
				\multirow{3}{*}{emotions}
				& 3
				&\textbf{.202±.021}&\underline{.204±.019}&.218±.022&.210±.020&.226±.023&.205±.020&.312±.056&.235±.022&.377±.004 \\
				& 4
				&\textbf{.209±.015}&\underline{.215±.018}&.235±.019&.228±.020&.247±.022&.221±.022&.500±.047&.253±.022&.705±.013 \\
				& 5
				&\textbf{.240±.020}&\underline{.242±.019}&.255±.023&.248±.021&.261±.022&.243±.028&.666±.018&.404±.017&.712±.010 \\ \hline
				\multirow{3}{*}{birds}
				& 3
				&\textbf{.045±.010}&\underline{.047±.009}&.065±.008&.052±.010&.093±.008&.095±.009&.058±.010&.131±.008&.065±.007 \\
				& 4
				&\textbf{.047±.011}&\underline{.048±.009}&.068±.009&.055±.010&.093±.007&.101±.009&.061±.011&.149±.010&.060±.001 \\
				& 5
				&\textbf{.046±.010}&\underline{.048±.010}&.070±.009&.056±.009&.092±.008&.104±.009&.057±.010&.174±.011&.068±.023 \\ \hline
				\multirow{3}{*}{image}
				& 2
				&\textbf{.168±.016}&\underline{.175±.014}&.188±.013&.180±.015&.195±.016&.185±.015&.191±.010&.214±.053&.218±.008 \\
				& 3
				&\textbf{.189±.020}&\underline{.198±.018}&.210±.019&.205±.016&.215±.027&.217±.010&.227±.015&.236±.056&.219±.009 \\
				& 4
				&\textbf{.212±.020}&\underline{.225±.017}&.248±.014&.235±.015&.261±.013&.240±.011&.312±.056&.395±.054&.222±.010 \\ \hline
				\multirow{2}{*}{scene}
				& 3
				&\textbf{.110±.008}&\underline{.118±.007}&.133±.008&.125±.009&.142±.005&.143±.021&.222±.022&.145±.039&.146±.006 \\
				& 5
				&\textbf{.134±.014}&\underline{.145±.012}&.188±.016&.165±.015&.211±.017&.203±.022&.344±.053&.277±.002&.149±.010 \\ \hline
				\multirow{2}{*}{yeast}
				& 7
				&\underline{.214±.011}&.228±.009&.225±.010&.220±.008&.229±.006&\textbf{.214±.008}&.217±.017&.240±.010&.251±.005 \\
				& 9
				&\textbf{.213±.012}&{.230±.010}&.228±.009&.220±.011&.234±.012&\underline{.215±.008}&.248±.036&.217±.008&.566±.004 \\ \hline
				\multirow{2}{*}{health}
				& 7
				&.036±.003&.036±.002&.037±.003&\underline{.035±.002}&.038±.002&.044±.001&.036±.002&.051±.003&\textbf{.035±.001} \\
				& 9
				&\textbf{.037±.002}&\underline{.037±.002}&.039±.002&.038±.002&.040±.002&.046±.001&.039±.001&.055±.003&.041±.000 \\ \hline
				\multirow{3}{*}{recreation}
				& 7
				&.064±.003&.066±.002&.068±.003&\underline{.063±.002}&.084±.003&.085±.003&.065±.002&.082±.003&\textbf{.057±.001} \\
				& 9
				&.072±.004&.071±.003&.075±.003&\underline{.069±.003}&.091±.003&.102±.003&.068±.002&.079±.001&\textbf{.064±.002} \\
				& 11
				&.080±.003&.078±.003&.082±.003&\underline{.075±.003}&.096±.003&.098±.005&.070±.002&.092±.003&\textbf{.065±.002} \\ \hline
				\multirow{4}{*}{arts}
				& 5
				&.060±.002&.059±.002&.062±.003&\underline{.058±.002}&.068±.002&.068±.001&.059±.003&.078±.001&\textbf{.054±.002} \\
				& 7
				&.061±.003&.060±.002&.064±.003&\underline{.059±.002}&.071±.002&.072±.001&.059±.002&.078±.002&\textbf{.056±.003} \\
				& 9
				&\textbf{.061±.004}&\underline{.062±.003}&.068±.003&.063±.003&.075±.002&.076±.002&.062±.002&.079±.002&.065±.003 \\
				& 11
				&.065±.002&.064±.002&.070±.003&\underline{.063±.002}&.079±.002&.078±.002&.063±.001&.081±.002&\textbf{.062±.001} \\ \hline
				\multirow{4}{*}{reference}
				& 5
				&\textbf{.025±.001}&\underline{.026±.001}&.030±.001&.028±.001&.033±.001&.035±.001&.028±.001&.037±.005&.044±.000 \\
				& 7
				&\textbf{.026±.002}&\underline{.027±.001}&.032±.001&.029±.002&.036±.002&.036±.001&.030±.001&.035±.001&.045±.001 \\
				& 9
				&.031±.002&\underline{.030±.001}&.033±.002&\textbf{.029±.002}&.038±.001&.037±.001&.031±.001&.050±.002&.048±.000 \\
				& 11
				&.035±.002&\underline{.033±.001}&.036±.002&\textbf{.032±.002}&.041±.002&.037±.001&.034±.001&.049±.001&.045±.002 \\ \bottomrule
		\end{tabular}}}
	\end{table*}
	
	\begin{table*}[!h]
		\centering
		\caption{{Comparison} of PML-MA with other state-of-the-art PML algorithms on {\itshape \textbf{RANKING LOSS}} (mean$\pm$std), where the best experimental performance (the smaller the better) is shown in boldface and the suboptimal is shown with an underscore.}
		\label{RANKING LOSS}
		\setlength{\abovecaptionskip}{0.1cm}
		\setlength{\belowcaptionskip}{0.1cm}
		\renewcommand{\arraystretch}{1}
		\resizebox{1\linewidth}{!}{{
			\begin{tabular}{ccccccccccc}
				\toprule
				{Data Set} &{avg.\#CLs} &{PML-MA} &{PML-PLR} &{FBD-PML} &{PML-ND}  &{P-LENFN} &{PAMB} &{PML-NI}
				&{PARTICLE} &{PML-fp} \\ \hline
				Mirflickr
				& 3.35
				&\textbf{.108±.008}&\underline{.109±.006}&.116±.007&.113±.005&.118±.005&.112±.038&.126±.007&.127±.103&.124±.003 \\
				Music-emotion
				& 5.29
				&\underline{.235±.007}&.236±.008&.238±.008&.242±.006&.246±.009&\textbf{.234±.007}&.246±.008&.362±.014&.458±.114 \\
				Music-style
				& 6.04
				&.137±.012&\underline{.136±.009}&.138±.010&.137±.008&.138±.010&\textbf{.135±.005}&.137±.010&.221±.010&.445±.231 \\
				YeastBP
				& 18.84
				&\textbf{.195±.011}&\underline{.208±.010}&.225±.011&.215±.009&.256±.011&.230±.011&.220±.011&.404±.033&.269±.012 \\ \hline
				\multirow{3}{*}{emotions}
				& 3
				&\textbf{.156±.038}&\underline{.158±.030}&.172±.032&.165±.028&.180±.031&.160±.021&.188±.029&.250±.035&.293±.015 \\
				& 4
				&\textbf{.165±.029}&\underline{.172±.025}&.185±.026&.178±.027&.192±.027&.177±.032&.211±.027&.263±.029&.407±.014 \\
				& 5
				&\textbf{.192±.032}&\underline{.198±.028}&.235±.029&.210±.030&.249±.030&.203±.025&.276±.039&.306±.034&.419±.004 \\ \hline
				\multirow{3}{*}{birds}
				& 3
				&\textbf{.171±.027}&\underline{.172±.030}&.175±.032&.173±.035&.174±.033&.196±.042&.177±.033&.322±.033&.271±.005 \\
				& 4
				&\textbf{.192±.041}&\underline{.194±.035}&.196±.033&.195±.030&.197±.032&.204±.028&.205±.034&.326±.027&.279±.024 \\
				& 5
				&\textbf{.191±.038}&\underline{.195±.034}&.202±.035&.198±.032&.204±.036&.229±.025&.219±.036&.359±.036&.271±.010 \\ \hline
				\multirow{3}{*}{image}
				& 2
				&\textbf{.148±.018}&\underline{.155±.016}&.178±.015&.165±.017&.188±.023&.177±.022&.194±.019&.230±.060&.282±.011 \\
				& 3
				&\textbf{.175±.022}&\underline{.185±.020}&.205±.021&.195±.018&.212±.026&.217±.015&.230±.024&.261±.070&.265±.009 \\
				& 4
				&\textbf{.232±.020}&\underline{.242±.018}&.268±.017&.250±.016&.285±.017&.255±.025&.303±.010&.328±.095&.380±.017 \\ \hline
				\multirow{2}{*}{scene}
				& 3
				&\textbf{.082±.010}&\underline{.095±.009}&.125±.010&.110±.011&.138±.012&.231±.017&.176±.015&.131±.048&.245±.008 \\
				& 5
				&\textbf{.114±.015}&\underline{.138±.013}&.200±.016&.165±.015&.242±.018&.250±.016&.284±.013&.291±.130&.284±.001 \\ \hline
				\multirow{2}{*}{yeast}
				& 7
				&\textbf{.171±.015}&\underline{.173±.012}&.177±.013&.175±.010&.178±.012&.214±.008&.184±.012&.182±.010&.185±.003 \\
				& 9
				&\textbf{.179±.015}&\underline{.182±.013}&.188±.014&.185±.011&.190±.016&.211±.007&.202±.016&.189±.009&.196±.017 \\ \hline
				\multirow{2}{*}{health}
				& 7
				&\textbf{.064±.006}&\underline{.065±.005}&.070±.005&.066±.004&.076±.006&.081±.007&.096±.007&.110±.008&.067±.003 \\
				& 9
				&\textbf{.065±.006}&\underline{.068±.005}&.075±.005&.070±.004&.088±.006&.087±.008&.112±.006&.109±.007&.074±.001 \\ \hline
				\multirow{3}{*}{recreation}
				& 7
				&\textbf{.159±.011}&\underline{.163±.009}&.182±.011&.170±.010&.199±.012&.173±.008&.226±.013&.230±.011&.177±.004 \\
				& 9
				&\textbf{.171±.008}&\underline{.178±.007}&.192±.009&.185±.008&.222±.009&.195±.006&.250±.010&.238±.011&.207±.007 \\
				& 11
				&\textbf{.188±.009}&\underline{.195±.008}&.218±.010&.202±.009&.242±.012&.204±.007&.270±.014&.295±.011&.215±.002 \\ \hline
				\multirow{4}{*}{arts}
				& 5
				&\textbf{.128±.005}&\underline{.135±.006}&.145±.007&.140±.005&.149±.007&.150±.007&.173±.007&.174±.006&.157±.000 \\
				& 7
				&\textbf{.139±.010}&\underline{.145±.008}&.158±.009&.150±.007&.168±.010&.152±.007&.196±.012&.180±.008&.160±.004 \\
				& 9
				&\textbf{.152±.009}&\underline{.155±.008}&.175±.010&.162±.007&.186±.013&.159±.009&.219±.011&.185±.008&.167±.003 \\
				& 11
				&\textbf{.164±.006}&\underline{.166±.005}&.182±.008&.172±.006&.203±.007&.168±.010&.239±.008&.203±.010&.185±.005 \\ \hline
				\multirow{4}{*}{reference}
				& 5
				&\underline{.101±.006}&.105±.005&.110±.007&.108±.006&.114±.007&.112±.008&.131±.008&.147±.011&\textbf{.094±.006} \\
				& 7
				&.114±.007&.118±.006&.126±.008&.122±.007&.130±.009&\underline{.110±.008}&.152±.009&.156±.015&\textbf{.103±.004} \\
				& 9
				&\textbf{.119±.012}&\underline{.120±.010}&.135±.011&.128±.009&.141±.012&.125±.006&.165±.014&.122±.009&.119±.003 \\
				& 11
				&\textbf{.123±.011}&\underline{.128±.009}&.145±.010&.135±.008&.152±.011&.160±.013&.180±.012&.174±.012&.124±.001 \\ \bottomrule
		\end{tabular}}}
	\end{table*}
	
	\begin{table*}[htbp]
		\centering
		\caption{{Comparison} of PML-MA with other state-of-the-art PML algorithms on {\itshape \textbf{ONE-ERROR}} (mean$\pm$std), where the best experimental performance (the smaller the better) is shown in boldface and the suboptimal is shown with an underscore.}
		\label{ONE-ERROR}
		\setlength{\abovecaptionskip}{0.1cm} 
		\setlength{\belowcaptionskip}{0.1cm}
		\renewcommand{\arraystretch}{1}
		\resizebox{1\linewidth}{!}{	{
			\begin{tabular}{ccccccccccc}
				\toprule
				{Data Set} &{avg.\#CLs} &{PML-MA} &{PML-PLR} &{FBD-PML} &{PML-ND} &{P-LENFN} &{PAMB} &{PML-NI} &{PARTICLE} &{PML-fp} \\ \hline
				Mirflickr
				& 3.35 
				&\textbf{.176±.018}&\underline{.183±.014}&.196±.016&.189±.015&.282±.011&.275±.093&.296±.024&.198±.012&.292±.011 \\ \hline        
				Music-emotion             
				& 5.29
				&\textbf{.420±.012}&\underline{.424±.013}&.444±.017&.432±.015&.476±.019&.428±.032&.478±.019&.585±.036&.690±.140 \\ \hline      
				Music-style               
				& 6.04     
				&\textbf{.331±.022}&\underline{.338±.018}&.349±.019&.342±.016&.356±.019&.343±.011&.345±.018&.405±.017&.620±.254 \\ \hline
				YeastBP               
				& 18.84    
				&\textbf{.544±.033}&\underline{.548±.028}&.560±.032&.552±.030&.597±.015&.564±.023&.553±.029&.999±.000&.685±.018 \\ \hline   
				\multirow{3}{*}{emotions}     
				& 3        
				&\textbf{.235±.070}&\underline{.241±.028}&.255±.030&.246±.027&.298±.059&.244±.026&.299±.050&.307±.069&.458±.000 \\ 
				& 4        
				&\textbf{.283±.050}&\underline{.297±.032}&.318±.033&.309±.034&.329±.035&.333±.060&.355±.064&.324±.050&.542±.000 \\ 
				& 5        
				&\underline{.339±.058}&.346±.040&.358±.043&.352±.041&.419±.054&.359±.059&.448±.068&\textbf{.331±.062}&.607±.011 \\ \hline
				\multirow{3}{*}{birds}     
				& 3        
				&\textbf{.411±.097}&\underline{.412±.088}&.414±.091&.413±.089&.415±.090&.492±.095&.417±.095&.580±.045&.645±.019 \\ 
				& 4        
				&\textbf{.450±.091}&\underline{.454±.085}&.457±.084&.458±.083&.463±.092&.536±.060&.461±.060&.628±.043&.634±.016 \\ 
				& 5        
				&\textbf{.440±.077}&\underline{.462±.060}&.474±.061&.470±.058&.480±.054&.576±.052&.478±.044&.654±.091&.599±.048 \\ \hline
				\multirow{3}{*}{image}     
				& 2        
				&\textbf{.275±.037}&\underline{.295±.029}&.315±.033&.305±.030&.357±.041&.329±.038&.358±.032&.371±.106&.507±.008 \\ 
				& 3        
				&\textbf{.308±.034}&.342±.033&.360±.035&\underline{.340±.032}&.393±.052&.387±.029&.412±.040&.385±.116&.486±.017 \\ 
				& 4        
				&\textbf{.411±.048}&\underline{.420±.040}&.429±.042&.425±.041&.509±.029&.431±.039&.536±.023&.458±.114&.644±.021 \\ \hline
				\multirow{2}{*}{scene}     
				& 3        
				&\underline{.261±.016}&.268±.017&.279±.019&.274±.018&.358±.022&\textbf{.249±.023}&.418±.029&.301±.108&.603±.008  \\
				& 5        
				&\textbf{.306±.041}&\underline{.332±.030}&.365±.034&.350±.032&.542±.041&.397±.035&.591±.030&.456±.189&.622±.001  \\ \hline
				\multirow{2}{*}{yeast}     
				& 7        
				&.229±.019&.231±.020&.236±.021&.234±.019&.235±.024&\underline{.217±.031}&.242±.020&\textbf{.198±.027}&.232±.004  \\
				& 9        
				&\underline{.233±.027}&.236±.022&.241±.024&.239±.023&.245±.028&.235±.026&.264±.020&\textbf{.233±.022}&.239±.024  \\ \hline
				\multirow{2}{*}{health}     
				& 7        
				&\textbf{.253±.023}&\underline{.258±.022}&.266±.025&.262±.023&.283±.021&.396±.024&.298±.026&.576±.098&.269±.008  \\
				& 9        
				&\textbf{.267±.020}&\underline{.273±.019}&.288±.021&.277±.020&.295±.026&.429±.018&.324±.015&.545±.050&.303±.010  \\ \hline
				\multirow{3}{*}{recreation}     
				& 7        
				&\underline{.499±.018}&.510±.020&.540±.022&.525±.021&.559±.021&.626±.023&.600±.019&.713±.022&\textbf{.303±.010}  \\
				& 9        
				&\textbf{.518±.019}&\underline{.538±.018}&.560±.019&.552±.020&.638±.021&.676±.026&.585±.023&.670±.016&.660±.013  \\
				& 11        
				&\textbf{.540±.018}&\underline{.565±.017}&.600±.019&.580±.018&.673±.027&.727±.021&.626±.013&.688±.019&.685±.015  \\ \hline
				\multirow{4}{*}{arts}     
				& 5       
				&\underline{.485±.021}&.490±.018&.495±.018&.493±.018&.496±.018&.515±.018&.496±.017&.712±.011&\textbf{.482±.006}  \\
				& 7        
				&\textbf{.495±.028}&\underline{.503±.024}&.508±.026&.507±.025&.509±.019&.534±.021&.529±.021&.718±.017&.514±.006  \\
				& 9        
				&\textbf{.503±.024}&\underline{.512±.022}&.523±.024&.517±.023&.528±.033&.570±.023&.569±.025&.720±.020&.554±.014  \\
				& 11        
				&\textbf{.508±.017}&\underline{.528±.019}&.548±.021&.538±.020&.555±.022&.613±.020&.601±.019&.762±.014&.554±.014  \\ \hline
				\multirow{4}{*}{reference}     
				& 5        
				&\underline{.378±.014}&.385±.013&.400±.014&.392±.014&.410±.014&.484±.019&.413±.012&.540±.054&\textbf{.343±.004}  \\
				& 7        
				&\textbf{.383±.018}&\underline{.401±.015}&.418±.017&.410±.016&.427±.019&.506±.020&.453±.017&.715±.103&.451±.021  \\
				& 9        
				&\textbf{.394±.017}&\underline{.410±.015}&.432±.017&.420±.016&.440±.017&.521±.020&.484±.015&.645±.051&.504±.001  \\
				& 11        
				&\textbf{.410±.019}&\underline{.425±.016}&.446±.018&.436±.017&.459±.016&.532±.021&.530±.023&.678±.019&.523±.002  \\ \bottomrule
		\end{tabular}}}
	\end{table*}
	
	\begin{table*}[htbp]
		\centering
		\caption{{Comparison} of PML-MA with other state-of-the-art PML algorithms on {\itshape \textbf{COVERAGE}} (mean$\pm$std), where the best experimental performance (the smaller the better) is shown in boldface and the suboptimal is shown with an underscore.}
		\label{COVERAGE}
		\setlength{\abovecaptionskip}{0.1cm} 
		\setlength{\belowcaptionskip}{0.1cm}
		\renewcommand{\arraystretch}{1}
		\resizebox{1\linewidth}{!}{	{
			\begin{tabular}{ccccccccccc}
				\toprule		
				{Data Set} &{avg.\#CLs} &{PML-MA} &{PML-PLR} &{FBD-PML} &{PML-ND} &{P-LENFN} &{PAMB} &{PML-NI} &{PARTICLE} &{PML-fp} \\ \hline
				Mirflickr
				& 3.35 
				&\textbf{.225±.004}&\underline{.225±.005}&.229±.078&.231±.004&.226±.006&.228±.007&.227±.005&.239±.054&.226±.012 \\ \hline
				Music-emotion             
				& 5.29
				&\textbf{.402±.007}&\underline{.404±.008}&.408±.011&.406±.009&.407±.010&.410±.010&.409±.009&.510±.012&.589±.097 \\ \hline
				Music-style               
				& 6.04     
				&.204±.014&.198±.010&.201±.012&.200±.011&\underline{.196±.009}&\textbf{.195±.008}&.196±.013&.284±.013&.491±.206 \\ \hline
				YeastBP               
				& 18.84    
				&\textbf{.263±.009}&\underline{.270±.010}&.285±.012&.280±.011&.357±.016&.299±.049&.289±.009&.423±.162&.368±.015 \\ \hline
				\multirow{3}{*}{emotions}     
				& 3        
				&\textbf{.297±.030}&\underline{.298±.028}&.301±.032&.299±.030&.316±.029&.300±.036&.325±.026&.357±.045&.371±.029 \\ 
				& 4        
				&\textbf{.297±.035}&\underline{.305±.030}&.310±.031&.308±.029&.324±.032&.312±.028&.339±.031&.356±.031&.469±.009 \\ 
				& 5        
				&\textbf{.325±.033}&\underline{.330±.029}&.336±.030&.333±.028&.372±.027&.337±.028&.396±.030&.392±.039&.484±.022 \\ \hline
				\multirow{3}{*}{birds}     
				& 3        
				&\textbf{.121±.038}&\underline{.122±.037}&.126±.040&.123±.039&.124±.039&.135±.043&.129±.043&.205±.041&.212±.014 \\ 
				& 4        
				&\textbf{.138±.032}&\underline{.139±.030}&.142±.033&.141±.031&.140±.031&.146±.041&.149±.033&.200±.038&.199±.017 \\ 
				& 5        
				&\textbf{.133±.045}&\underline{.137±.040}&.141±.042&.139±.041&.145±.045&.154±.035&.151±.044&.229±.034&.208±.004 \\ \hline
				\multirow{3}{*}{image}     
				& 2        
				&\textbf{.173±.017}&\underline{.180±.016}&.185±.018&.183±.017&.204±.020&.187±.017&.208±.019&.226±.072&.283±.006 \\
				& 3        
				&\textbf{.195±.020}&\underline{.205±.018}&.215±.021&.210±.019&.223±.025&.219±.016&.237±.022&.239±.076&.267±.001 \\ 
				& 4        
				&\textbf{.242±.017}&\underline{.250±.019}&.255±.022&.253±.020&.280±.016&.257±.021&.296±.019&.287±.097&.366±.015 \\ \hline
				\multirow{2}{*}{scene}     
				& 3        
				&\textbf{.083±.009}&\underline{.095±.010}&.105±.012&.100±.011&.131±.010&.180±.022&.163±.011&.112±.048&.217±.007 \\
				& 5        
				&\textbf{.110±.013}&\underline{.120±.012}&.135±.015&.128±.014&.217±.017&.193±.017&.253±.012&.222±.121&.248±.002 \\ \hline
				\multirow{2}{*}{yeast}     
				& 7        
				&.461±.010&.463±.011&.466±.013&.465±.012&.474±.019&.468±.017&.485±.020&\textbf{.454±.014}&\underline{.461±.015} \\
				& 9        
				&\textbf{.466±.022}&.468±.018&.472±.020&.470±.019&.495±.022&.478±.021&.515±.019&\underline{.467±.017}&.469±.016 \\ \hline
				\multirow{2}{*}{health}     
				& 7        
				&\textbf{.114±.010}&\underline{.118±.008}&.123±.009&.120±.007&.138±.009&.139±.010&.166±.010&.145±.013&.125±.004 \\
				& 9        
				&\textbf{.122±.009}&\underline{.125±.007}&.130±.008&.128±.006&.153±.009&.149±.012&.185±.010&.155±.009&.132±.002 \\ \hline
				\multirow{3}{*}{recreation}     
				& 7        
				&\textbf{.210±.012}&\underline{.211±.010}&.213±.011&.212±.009&.251±.013&.214±.009&.278±.014&.261±.011&.229±.005 \\
				& 9        
				&\textbf{.222±.003}&\underline{.223±.005}&.228±.008&.225±.006&.274±.013&.234±.007&.303±.014&.272±.013&.259±.003 \\
				& 11        
				&\textbf{.239±.012}&\underline{.241±.010}&.244±.011&.242±.009&.293±.014&.245±.008&.321±.016&.335±.013&.268±.000 \\ \hline
				\multirow{4}{*}{arts}     
				& 5       
				&\textbf{.196±.008}&\underline{.200±.007}&.208±.009&.204±.008&.222±.010&.221±.009&.250±.010&.225±.006&.217±.000 \\
				& 7        
				&\textbf{.211±.016}&\underline{.215±.012}&.219±.013&.217±.011&.244±.016&.225±.010&.274±.018&.232±.011&.236±.006 \\
				& 9        
				&.227±.008&.224±.007&.226±.009&.225±.008&.265±.013&.228±.008&.299±.011&.237±.010&\textbf{.222±.011} \\
				& 11        
				&.242±.008&\underline{.238±.006}&.241±.009&.240±.008&.283±.007&.239±.011&.322±.009&.254±.010&\textbf{.237±.007} \\ \hline
				\multirow{4}{*}{reference}     
				& 5        
				&\underline{.109±.009}&.112±.007&.118±.008&.115±.006&.138±.009&.134±.008&.156±.010&.156±.011&\textbf{.093±.005} \\
				& 7        
				&\textbf{.116±.010}&\underline{.117±.008}&.120±.009&.119±.007&.155±.011&.134±.009&.177±.011&.162±.014&.121±.003 \\
				& 9        
				&\textbf{.119±.015}&\underline{.121±.010}&.128±.012&.125±.011&.165±.015&.145±.007&.190±.017&.132±.009&.132±.003 \\
				& 11        
				&\textbf{.128±.014}&\underline{.129±.012}&.132±.013&.130±.011&.177±.014&.179±.015&.205±.015&.188±.013&.148±.000 \\ \bottomrule
		\end{tabular}}}
	\end{table*}
	
	\begin{table*}[!t]
		\centering
		\caption{{Comparison} of PML-MA with other state-of-the-art PML and MLL algorithms on {\itshape \textbf{AVERAGE PRECISION}}, where the best experimental performance (the larger the better) is shown in boldface and the suboptimal is shown with an underscore.}
		\label{AVERAGE PRECISION}
		\setlength{\abovecaptionskip}{0.1cm} 
		\setlength{\belowcaptionskip}{0.1cm}
		\renewcommand{\arraystretch}{1}
		\resizebox{1\linewidth}{!}{	{
			\begin{tabular}{ccccccccccc}
				\toprule
				{Data Set} &{avg.\#CLs} &{PML-MA} &{PML-PLR} &{FBD-PML} &{PML-ND} &{P-LENFN} &{PAMB} &{PML-NI} &{PARTICLE} &{PML-fp} \\ \hline
				Mirflickr
				& 3.35 
				&\textbf{.833±.011}&\underline{.825±.010}&.815±.015&.820±.012&.798±.007&.791±.019&.786±.009&.813±.136&.792±.005 \\ \hline
				Music-emotion             
				& 5.29
				&\textbf{.633±.010}&\underline{.630±.010}&.615±.013&.628±.011&.610±.012&.626±.011&.608±.012&.506±.016&.435±.093 \\ \hline
				Music-style               
				& 6.04     
				&\textbf{.743±.017}&\underline{.742±.010}&.735±.014&.740±.012&.731±.016&.741±.007&.739±.015&.657±.012&.472±.200 \\ \hline
				YeastBP               
				& 5.93   
				&\textbf{.418±.020}&\underline{.410±.020}&.380±.023&.406±.021&.339±.015&.356±.022&.404±.022&.108±.016&.286±.009 \\ \hline
				\multirow{3}{*}{emotions}     
				& 3        
				&\textbf{.814±.043}&\underline{.812±.020}&.800±.030&.808±.025&.781±.033&.810±.017&.777±.028&.747±.035&.675±.005 \\ 
				& 4        
				&\textbf{.800±.026}&\underline{.790±.030}&.775±.035&.785±.032&.768±.028&.783±.036&.749±.034&.739±.033&.581±.003 \\ 
				& 5        
				&\textbf{.761±.028}&\underline{.755±.029}&.740±.032&.752±.030&.704±.031&.750±.028&.680±.039&.702±.037&.563±.002 \\ \hline
				\multirow{3}{*}{birds}     
				& 3        
				&\textbf{.634±.061}&\underline{.630±.058}&.620±.060&.628±.059&.627±.061&.589±.052&.617±.057&.379±.046&.424±.004 \\ 
				& 4        
				&\textbf{.597±.060}&\underline{.590±.050}&.575±.055&.585±.052&.581±.048&.564±.044&.572±.041&.419±.046&.418±.001 \\ 
				& 5        
				&\textbf{.599±.049}&.585±.030&.580±.035&\underline{.590±.028}&.570±.025&.495±.029&.564±.034&.372±.047&.457±.023 \\ \hline
				\multirow{3}{*}{image}     
				& 2        
				&\textbf{.820±.021}&\underline{.810±.022}&.790±.025&.805±.023&.780±.024&.798±.024&.770±.020&.743±.070&.672±.007 \\ 
				& 3        
				&\textbf{.795±.022}&\underline{.770±.020}&.755±.025&.765±.022&.747±.029&.748±.019&.732±.024&.725±.084&.688±.009 \\ 
				& 4        
				&\textbf{.730±.024}&\underline{.720±.025}&.700±.028&.715±.026&.671±.018&.711±.026&.653±.011&.668±.091&.573±.019 \\ \hline
				\multirow{2}{*}{scene}     
				& 3        
				&\textbf{.857±.014}&\underline{.840±.015}&.825±.018&.835±.016&.779±.012&.771±.025&.735±.016&.796±.074&.624±.001 \\
				& 5        
				&\textbf{.813±.024}&\underline{.780±.021}&.760±.024&.770±.022&.649±.025&.722±.023&.609±.018&.649±.149&.600±.000 \\ \hline
				\multirow{2}{*}{yeast}     
				& 7        
				&\underline{.758±.019}&.757±.018&.750±.019&.756±.018&.753±.018&\textbf{.768±.014}&.746±.017&.754±.013&.736±.005 \\
				& 9        
				&\underline{.752±.018}&.750±.015&.742±.017&.748±.016&.740±.017&\textbf{.753±.013}&.725±.016&.744±.011&.727±.020 \\ \hline
				\multirow{2}{*}{health}     
				& 7        
				&\textbf{.771±.017}&\underline{.768±.018}&.762±.019&.767±.018&.750±.016&.680±.016&.723±.018&.501±.045&.760±.008 \\
				& 9        
				&.753±.016&.750±.017&.740±.019&.745±.018&.733±.019&.657±.014&.693±.015&.497±.020&\underline{.736±.008} \\ \hline
				\multirow{3}{*}{recreation}     
				& 7        
				&\textbf{.600±.014}&\underline{.599±.014}&.590±.016&.595±.015&.541±.019&.519±.014&.511±.018&.418±.018&.589±.004 \\
				& 9        
				&\textbf{.580±.014}&\underline{.570±.012}&.560±.014&.565±.013&.503±.014&.470±.016&.473±.012&.453±.011&.534±.007 \\
				& 11        
				&\textbf{.557±.013}&\underline{.545±.010}&.525±.012&.535±.011&.470±.016&.438±.013&.434±.019&.397±.015&.514±.007 \\ \hline
				\multirow{4}{*}{arts}     
				& 5       
				&\textbf{.608±.014}&\underline{.602±.013}&.595±.015&.600±.014&.591±.013&.584±.012&.573±.012&.464±.010&.571±.003 \\
				& 7        
				&\textbf{.597±.017}&\underline{.590±.015}&.580±.017&.588±.016&.571±.012&.569±.013&.541±.016&.441±.014&.552±.007 \\
				& 9        
				&\textbf{.586±.018}&\underline{.583±.017}&.575±.019&.582±.017&.546±.023&.548±.015&.502±.018&.439±.014&.532±.012 \\
				& 11        
				&\textbf{.573±.008}&\underline{.560±.009}&.545±.011&.550±.010&.519±.014&.518±.016&.472±.014&.439±.014&.514±.003 \\ \hline
				\multirow{4}{*}{reference}     
				& 5        
				&\textbf{.693±.009}&.675±.011&.670±.012&\underline{.680±.010}&.664±.009&.625±.014&.653±.009&.565±.024&.629±.003 \\
				& 7        
				&\textbf{.681±.012}&.655±.013&.650±.014&\underline{.660±.013}&.643±.012&.612±.015&.614±.011&.448±.058&.612±.012 \\
				& 9        
				&\textbf{.667±.014}&.658±.016&.650±.018&\underline{.660±.015}&.627±.016&.581±.017&.586±.016&.482±.035&.569±.001 \\
				& 11        
				&\underline{.651±.014}&.640±.015&.620±.018&\textbf{.652±.016}&.609±.016&.544±.021&.551±.017&.481±.023&.549±.002 \\ \bottomrule
		\end{tabular}}}
	\end{table*}
	
	\subsubsection{Comparing Approaches} 
	In this paper, the performance of PML-MA is compared with {eight} state-of-the-art PML baselines, including {PML-PLR (IJCAI'25)~\cite{pml-plr}, FBD-PML (Neural Networks'25)~\cite{fbd-pml}, PML-ND (TMM'24)~\cite{pml-nd}, P-LENFN (ACM MM'24) \cite{lenfn}, PAMB (TPAMI'23)~\cite{pamb}, PML-NI (TPAMI'22)~\cite{pml-ni}, PARTIAL (TPAMI'22)~\cite{particle}, PML-fp (AAAI'18)~\cite{pml-fp}.}		
	\subsection{Experimental Results}
	\begin{figure*}[t]
		\vspace{-0.1cm}
		\centering
		\hspace{1cm}
		\subfloat[hamming~loss]{\includegraphics[width = 0.3\textwidth]{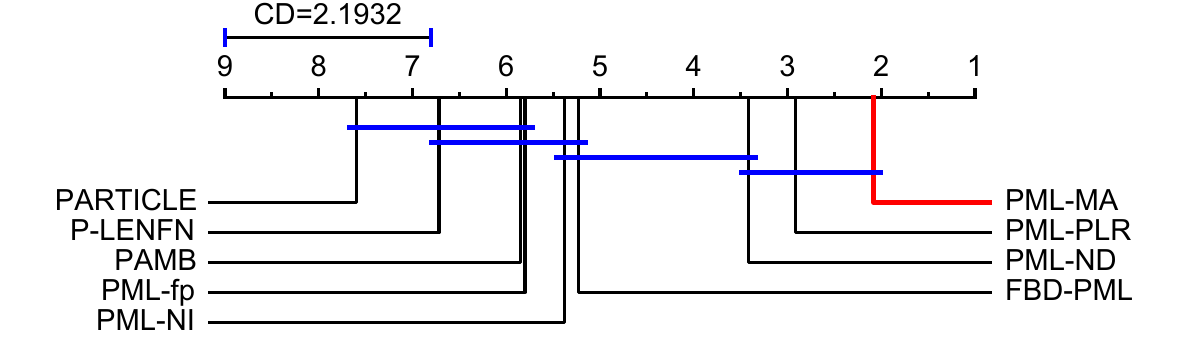}}
		\hspace{2cm}
		\subfloat[ranking~loss]{\includegraphics[width = 0.3\textwidth]{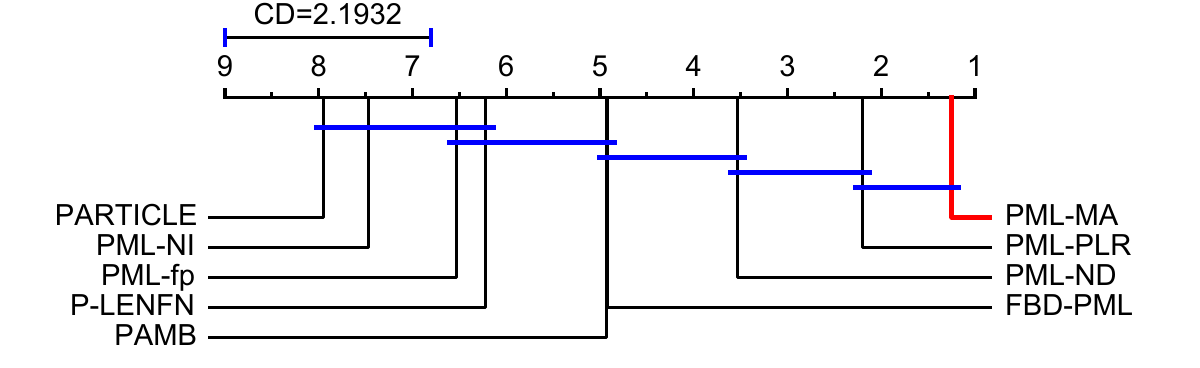}}
		\newline
		\centering
		\vspace{-0.1cm}
		\subfloat[one~error]{\includegraphics[width = 0.3\textwidth]{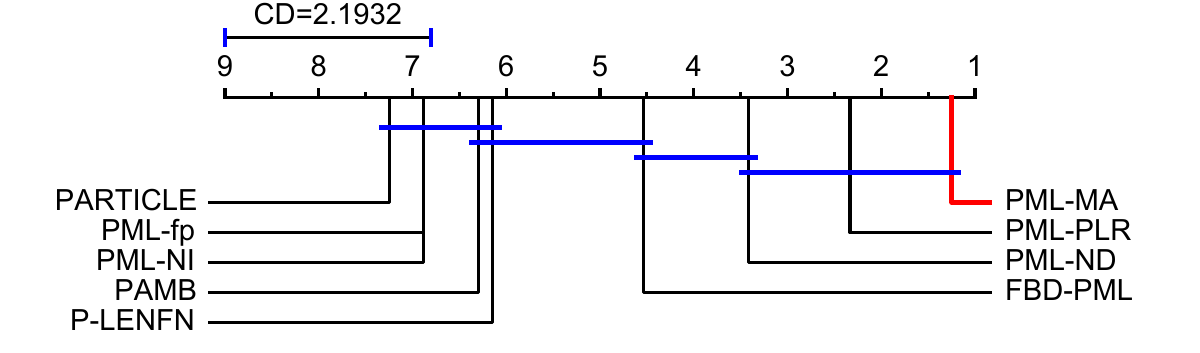}}
		\hfill
		\subfloat[coverage]{\includegraphics[width = 0.3\textwidth]{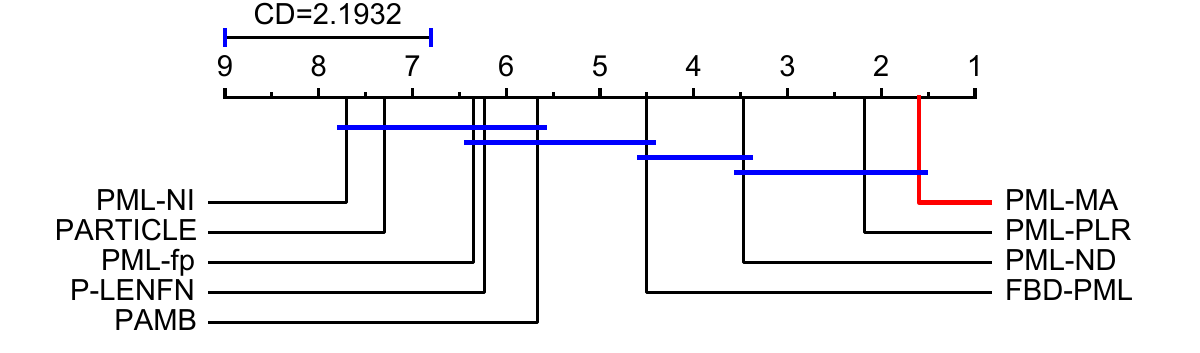}}
		\hfill
		\subfloat[average~precision]{\includegraphics[width = 0.3\textwidth]{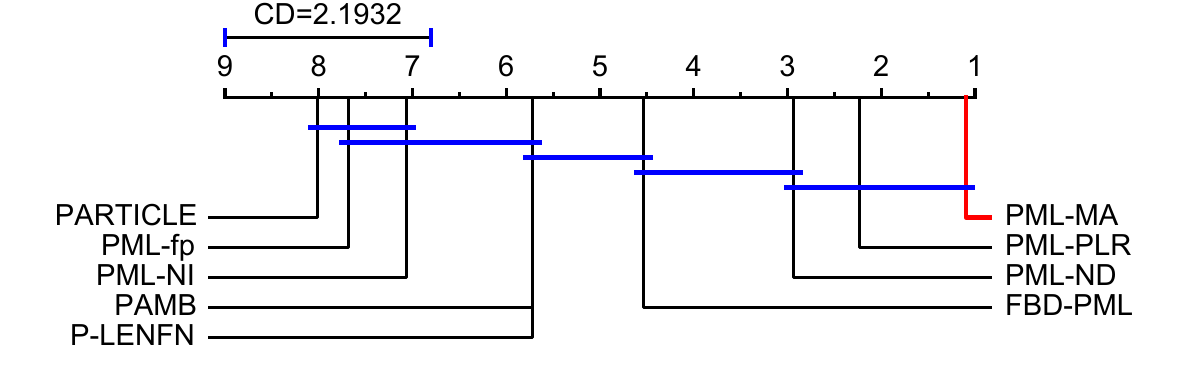}}
		\caption{{Results of PML-MA against other approaches with the Nemenyi test(CD = 2.1934 at 0.05 significance level).}}
		\label{fig_2}
	\end{figure*}
	{Tables \ref{HAMMING LOSS} to \ref{AVERAGE PRECISION} summarize the experimental results across all compared methods using five evaluation metrics. For each dataset and metric, the best performance is highlighted in boldface and the second-best is underlined. To rigorously validate the statistical significance of improvements, we conduct pairwise Wilcoxon signed-rank tests between PML-MA and each baseline across all 30 dataset configurations. The results confirm that PML-MA is significantly better than all baselines ($p<0.05$) on every metric. Due to space constraints, the average rankings of all methods across the five metrics are provided in Appendix C.}
	
	Additionally, the \textit{Friedman test} \cite{demvsar2006statistical} is applied to assess whether significant performance differences exist among all compared methods. The test evaluates the Friedman statistic $F_F$ \footnote{ The critical value can be obtained by running the Matlab command \texttt{icdf('F',1-\(\xi\), k-1, (k-1)*(N-1))} where \(\xi\) denotes the significance level.} based on the average ranks of methods across $N=30$ datasets. Table \ref{FF} presents the Friedman statistics for each evaluation metric along with the critical values at 0.05 significance level. The results clearly reject the null hypothesis across all metrics, confirming statistically significant performance differences among the compared methods.
	
	\begin{table}[htbp]
		\centering
		\vspace{-0.1cm}
		{\caption{Summary of FRIEDMAN STATISTICS $\ F_F$ FOR EACH EVALUATION METRIC AT 0.05 SIGNIFICANCE LEVEL \\
			(\# APPROACHES $k$ = 9, DATA SETS $N$ = 30).}\label{FF}}
		{
		\begin{tabular}{llc}
			\toprule
			Evaluation metric & $\ F_F$ & Critical value \\\hline
			Hamming Loss & 23.1348  &   \\
			Ranking Loss & 71.4812  &   \\
			One Error & 54.1232  & 1.9785 \\
			Coverage & 51.7904   &   \\
			Average precision & 128.4977  & \\ \bottomrule
		\end{tabular}} 
		\vspace{-0.1cm}
	\end{table}%
	
	We apply the \textit{Nemenyi test} \cite{demvsar2006statistical} as post-hoc analysis to assess whether PML-MA significantly outperforms other methods. Using PML-MA as the control method, we evaluate the average rank difference across all datasets. A performance difference is considered significant if the rank difference exceeds the \textit{critical difference} (CD) \cite{demvsar2006statistical}.
	Fig. \ref{fig_2} shows the CD plot of the five evaluation indicators, with the average ranking of each method displayed along the axis, the higher the ranking the further to the right. Where PML-MA as the control method is highlighted with a red line, if the average ranking difference between the comparison method and the control method is within CD, the bold blue line is connected. Otherwise, these methods are considered significantly different in performance compared to PML-MA.
	
	{In each diagram, methods are ranked along the axis in descending order from left to right. Methods connected to PML-MA by thick blue lines have performance differences within the CD threshold, indicating comparable performance. Otherwise, the difference is statistically significant. Key observations include:}
	\begin{itemize}
		\item {Across 150 experimental cases (30 datasets $\times$ 5 metrics), PML-MA achieves the best or second-best performance in 133 cases (88.67\%), and attains the lowest (best) average rank across all five metrics, demonstrating that feature-label modal alignment effectively improves label-feature consistency, mitigates noisy labels, and enhances classification performance. The Wilcoxon signed-rank tests further confirm that the superiority of PML-MA over every baseline is statistically significant ($p<0.05$) on all metrics.}
		
		\item {\textit{Comparison with recent SOTA methods.} Among all baselines, PML-PLR and PML-ND are the most competitive, achieving overall average ranks of 2.37 and 3.35, respectively. PML-PLR reconstructs pseudo-labels via instance-level correlations, and PML-ND leverages negative label information for disambiguation, both representing recent advances in PML. Nevertheless, PML-MA consistently outperforms them, particularly on Ranking loss (1.25 vs.\ 2.20 and 3.53) and Average precision (1.10 vs.\ 2.23 and 2.93). This suggests that explicitly aligning features and labels as complementary modalities captures richer structural information than instance-level or negative-label-based strategies alone.}
		
		\item {\textit{Comparison with prototype-based methods.} Both FBD-PML and PML-fp employ class prototype learning mechanisms similar in spirit to our multi-peak prototype component $\mathcal{L}_{MCP}$. However, PML-fp uses single-peak feature prototypes that assign each instance to only one dominant category, ignoring the multi-label nature of the data; FBD-PML connects feature and label spaces via fuzzy confidence scores but does not explicitly model multi-peak distributions. In contrast, our multi-peak prototype learning uses continuous pseudo-labels as soft membership weights, with a non-integer degree $d_{ii}=\sum_j r_{ij}$ that naturally encodes multi-label complexity. As a result, PML-MA outperforms FBD-PML (avg.\ rank 4.74) and PML-fp (avg.\ rank 6.65) by a substantial margin, confirming that multi-peak prototype modeling combined with modal alignment yields a more effective disambiguation framework.}
		
		\item {\textit{Robustness across noise levels.} On synthetic datasets where noise levels vary (avg.\#CLs from 2 to 11), PML-MA maintains stable top-tier performance even under severe noise, whereas methods like PARTICLE and PML-NI degrade rapidly. This robustness stems from the synergistic effect of orthogonal decomposition filtering noise at the label level and global-local alignment anchoring pseudo-labels to the feature manifold.}
	\end{itemize}
	\begin{figure*}[t]
		\centering
		\subfloat[$\mathbf{\lambda}$]{\includegraphics[width = 0.3\textwidth]{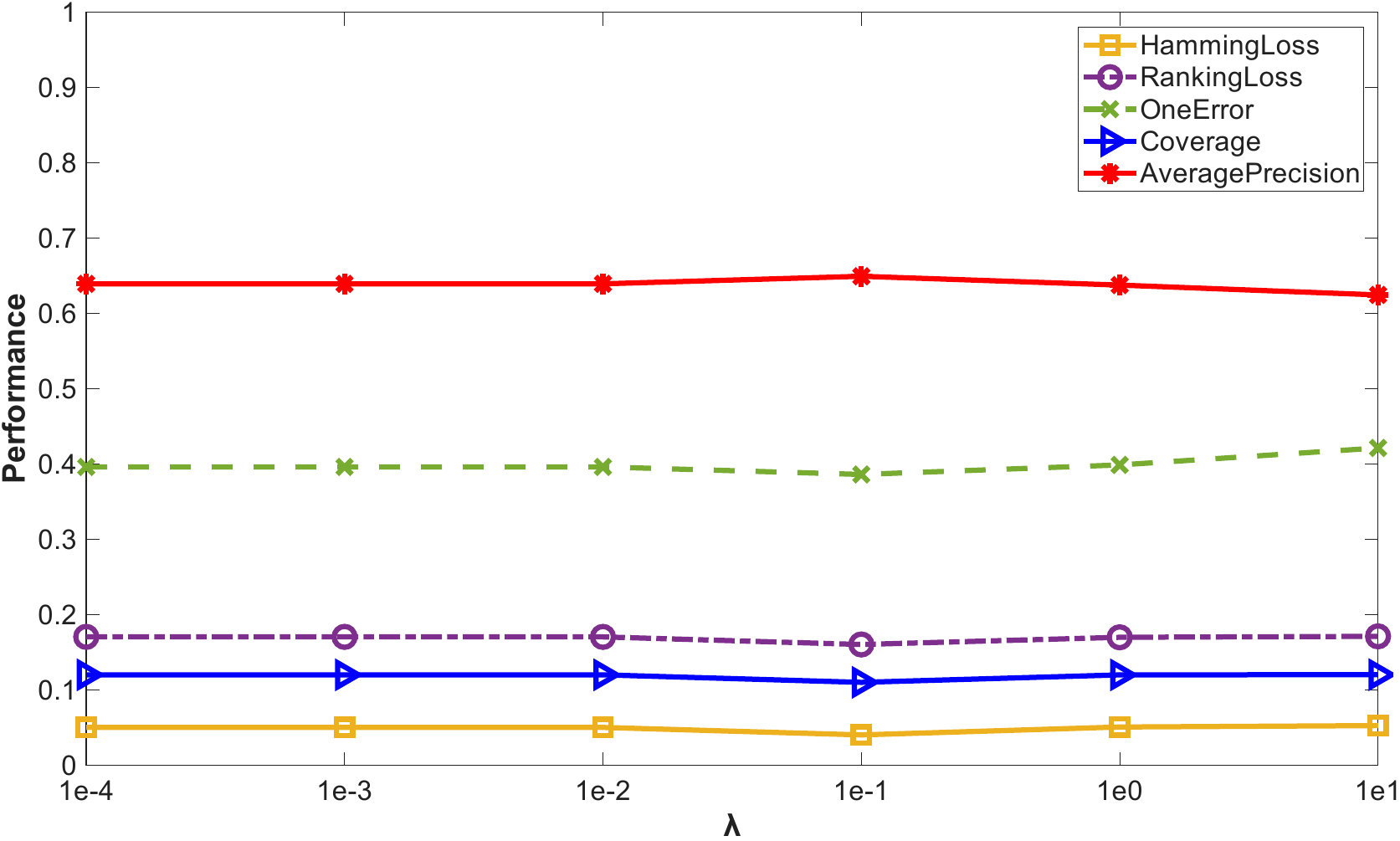}}
		\hspace{1cm}
		\subfloat[$\mathbf{\alpha}$]{\includegraphics[width = 0.3\textwidth]{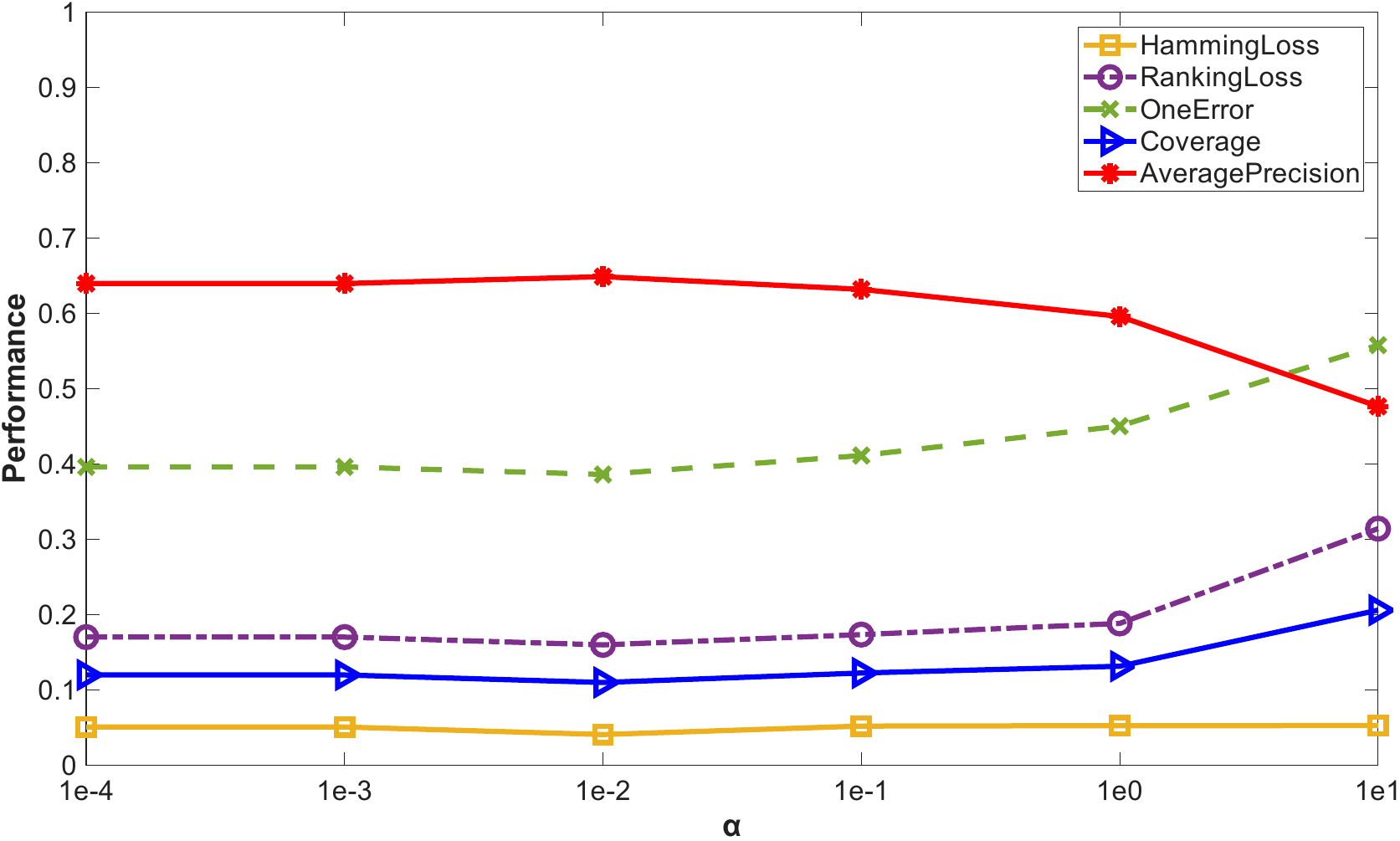}}
		\\
		\subfloat[$\mathbf{\beta}$]{\includegraphics[width = 0.3\textwidth]{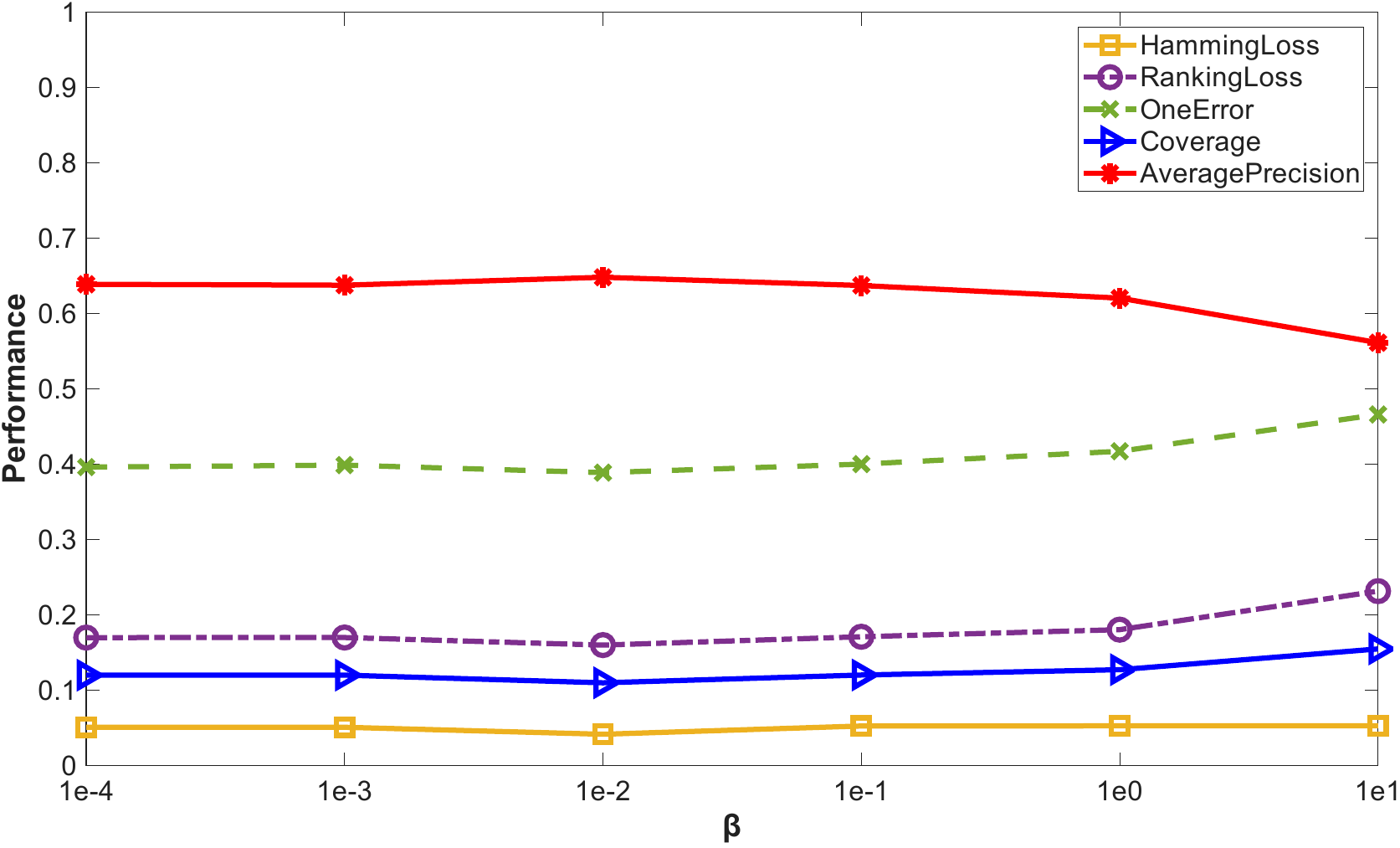}}
		\hspace{1cm}
		\subfloat[$\mathbf{\gamma}$]{\includegraphics[width = 0.3\textwidth]{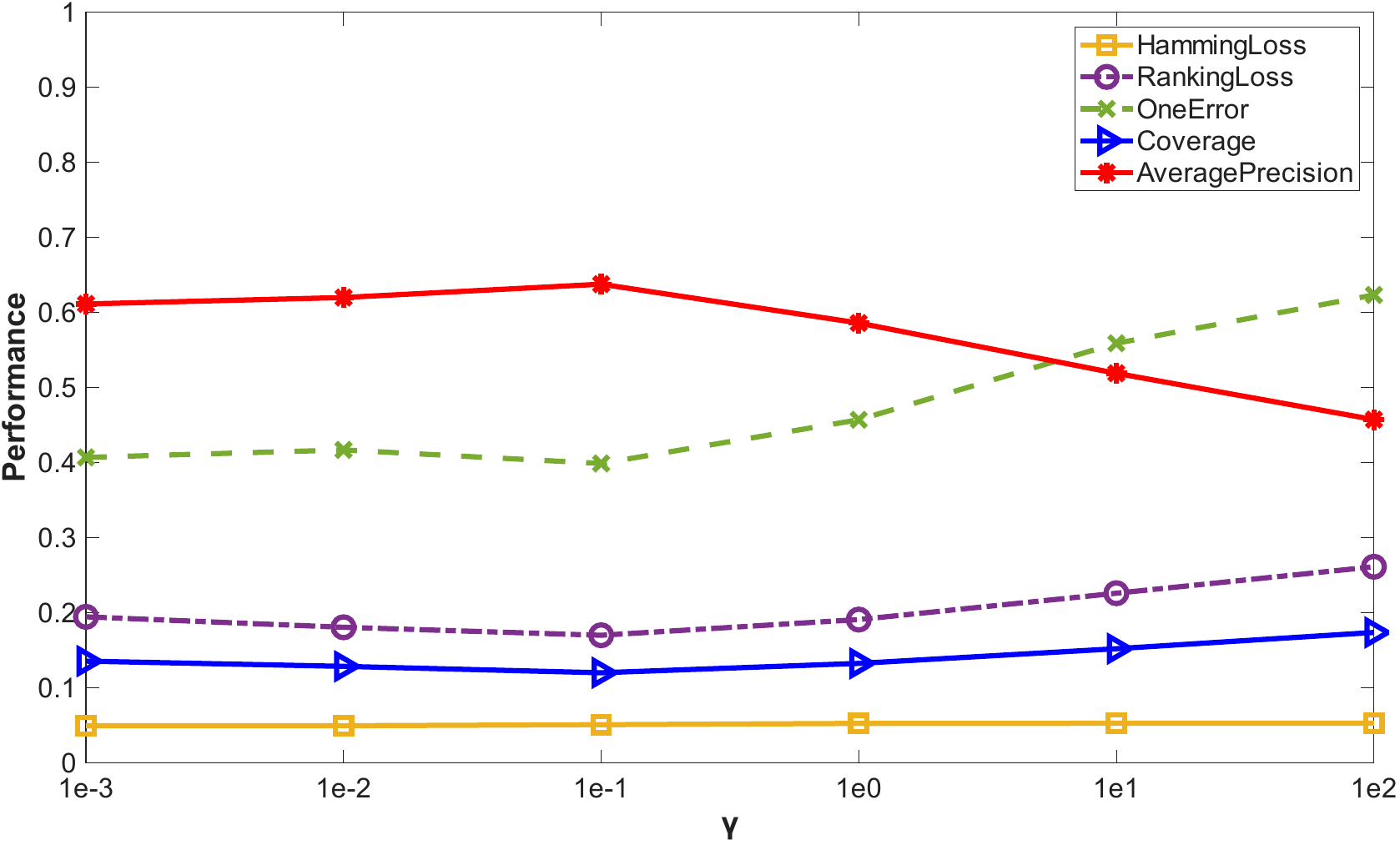}}
		\caption{Results of PML-MA with varying values of trade-off parameters on \textit{birds} dataset.}
		\label{fig_3}
	\end{figure*}
	\subsection{Parameter Analysis}
	We conduct parametric sensitivity experiments on the $birds$ dataset ($avg.\#CLS = 3$) to assess each parameter's influence. We vary four parameters ($\lambda$, $\alpha$, $\beta$, $\gamma$) within $\{10^{-4}, 10^{-3}, 10^{-2}, 10^{-1}, 10^{0}, 10^{1}\}$ while keeping others fixed. Fig. \ref{fig_3} shows that all parameters exhibit optimal values at intermediate ranges. Key findings include:
	\begin{itemize}
		\item $\lambda$ controls the low-rank constraint. Moderate values improve performance by leveraging the low-rank nature of ground-truth labels, while excessive values degrade performance, indicating a lower bound on this property.
		\item $\alpha$ regulates local alignment strength. Excessively high values cause overemphasis on local information at the expense of global consistency. {Empirically, for datasets with strong local manifold structures (e.g., image and scene datasets where neighborhood similarity is reliable), moderately larger $\alpha$ values (around $10^{-1}$ to $10^{0}$) are preferred. For datasets with weaker local structures or higher noise levels, smaller $\alpha$ values prioritize global alignment for more robust disambiguation.}
		\item $\beta$ governs class prototype learning. Overly high values lead to excessive reliance on ambiguous clustering information. {In practice, $\beta$ around $10^{-2}$ to $10^{-1}$ yields stable performance, as this range balances prototype-guided refinement with the other objectives without over-constraining pseudo-labels to potentially noisy cluster assignments.}
		\item $\gamma$ controls classifier regularization. Too small values cause overfitting, while too large values reduce model expressiveness. {We observe that $\gamma$ in the range $10^{-2}$ to $10^{0}$ generally performs well; the optimal value tends to be slightly larger on high-dimensional datasets.}
	\end{itemize}
	\begin{figure}[t]
		\centering
		\includegraphics[width=0.6\linewidth]{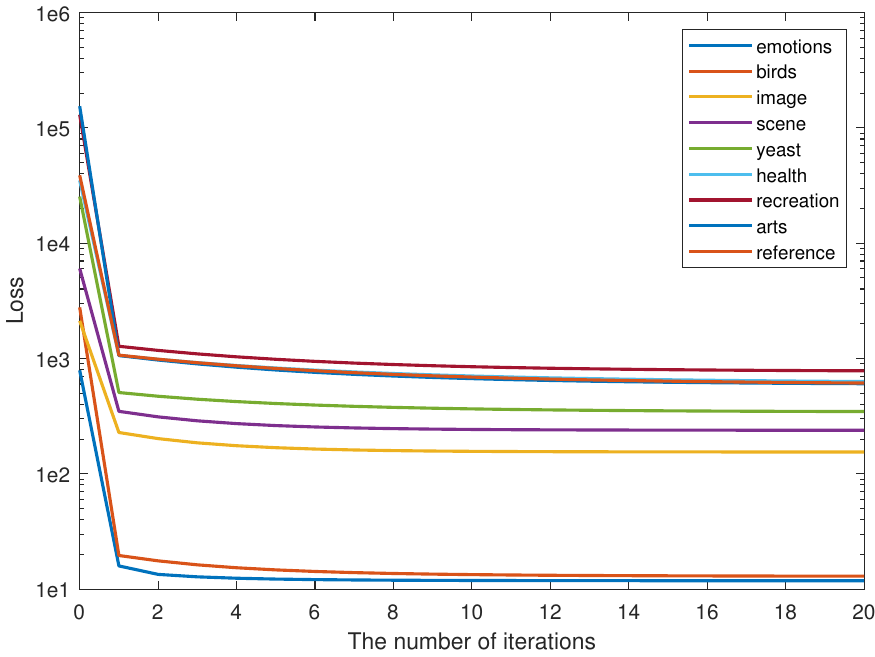}
		\caption{The convergence curves of PML-MA on the synthetic datasets.}
		\label{fig_5}
	\end{figure}
	\begin{table}[t]
		\centering
		\caption{Ablation study results. Comparison of performance with and without each module, in metric \textbf{Average precision}. The best experimental performance is shown in boldface.}
		\label{ablation}
		\renewcommand{\arraystretch}{1.2}
		\resizebox{1\linewidth}{!}{
			\begin{tabular}{l|cccc}
				\toprule
				\multicolumn{1}{l|}{Component} & \multicolumn{4}{c}{Configuration and accordingly performance} \\ \hline
				$\mathcal{L}_{LRO}$ & \ding{51} & \ding{55} & \ding{51} & \ding{51} \\ 
				$\mathcal{L}_{GLMA}$ & \ding{51} & \ding{51} & \ding{55} & \ding{51} \\ 
				$\mathcal{L}_{MCP}$ & \ding{51} & \ding{51} & \ding{51}  & \ding{55} \\ \cline{1-5}
				emotions (avg.\#CLs=4)  & \textbf{.800$\pm$.026} & .793$\pm$.020 & {.785$\pm$.013}  & .796$\pm$.023 \\ 
				birds (avg.\#CLs=4) & \textbf{.597$\pm$.060} & .590$\pm$.040 & {.584$\pm$.048}  & .595$\pm$.051 \\ 
				scene (avg.\#CLs=5) & \textbf{.813$\pm$.024} & .811$\pm$.020 & {.807$\pm$.012}  & .810$\pm$.028 \\
				yeast (avg.\#CLs=7) & \textbf{.758$\pm$.019} & .752$\pm$.022 & {.750$\pm$.018}  & .753$\pm$.014\\ \bottomrule
		\end{tabular}}
	\end{table}
	\subsection{Convergence Analysis}
	To verify convergence, we plot the objective function values of PML-MA 
	over iterations for nine datasets: $emotions$, $birds$, $image$, $scene$, 
	$yeast$, $health$, $recreation$, $arts$, and $reference$. As shown in 
	Fig.~\ref{fig_5} (with logarithmic y-axis), the loss decreases rapidly 
	and stabilizes within approximately 10 iterations across all datasets, 
	demonstrating fast convergence and confirming the efficiency of our method.
	\subsection{Ablation}
	To evaluate each module's effectiveness, we conduct ablation studies on four benchmark datasets ($emotions$, $birds$, $scene$, and $yeast$). We test the full model and three variants, each with one key component removed: label low-rank orthogonal decomposition ($\mathcal{L}_{LRO}$), global and local modal alignment ($\mathcal{L}_{GLMA}$), or multi-peak class prototype learning ($\mathcal{L}_{MCP}$).
	
	Table \ref{ablation} shows all three modules positively impact performance, with {$\mathcal{L}_{GLMA}$ (Global and Local Modal Alignment)} demonstrating the most significant contribution through improved noise reduction and feature-label alignment. These results confirm each module's effectiveness in enhancing partial multi-label learning.
	
	Additionally, we compare our low-rank orthogonal decomposition (\textbf{LRO}) with conventional low-rank sparse decomposition (\textbf{LRS}) on the $yeast$ and $reference$ datasets. We train models using ground-truth labels (\textbf{GTL}), labels from Eq. \eqref{eq1} (\textbf{LRS}), and pseudo-labels from Eq. \eqref{eq3} (\textbf{LRO}). As shown in Fig. \ref{fig_6}, \textbf{LRO} achieves higher consistency with \textbf{GTL}. While both methods perform comparably under low noise, \textbf{LRO} exhibits superior robustness as noise increases---the sparse assumption of \textbf{LRS} struggles with high noise, whereas \textbf{LRO}'s orthogonal decomposition maintains stability and label reliability.
	
	\begin{figure}[htbp]
		\centering
		
		\subfloat[yeast]{\includegraphics[width = 0.3\textwidth]{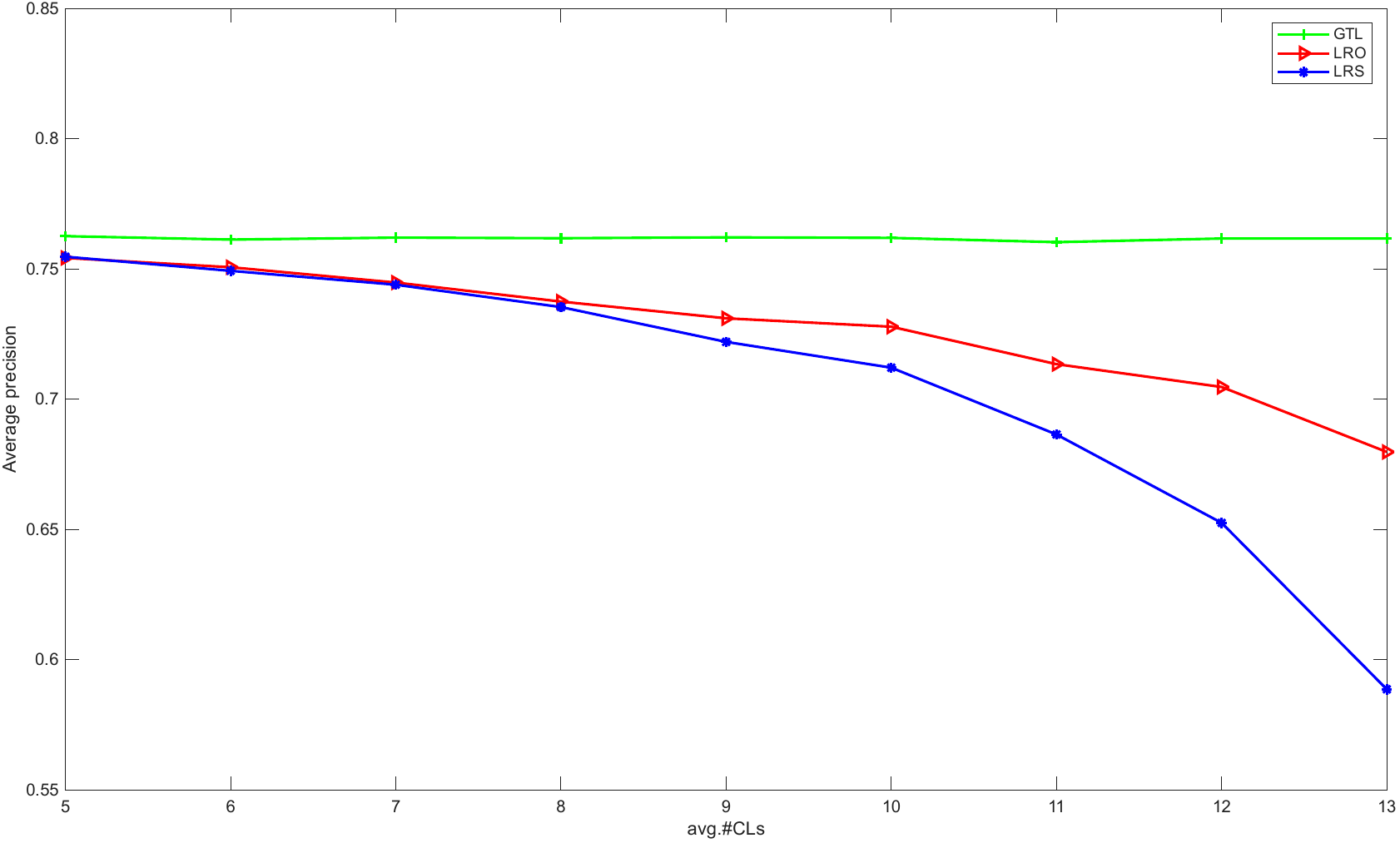}}
		\hfill
		\subfloat[reference]{\includegraphics[width = 0.3\textwidth]{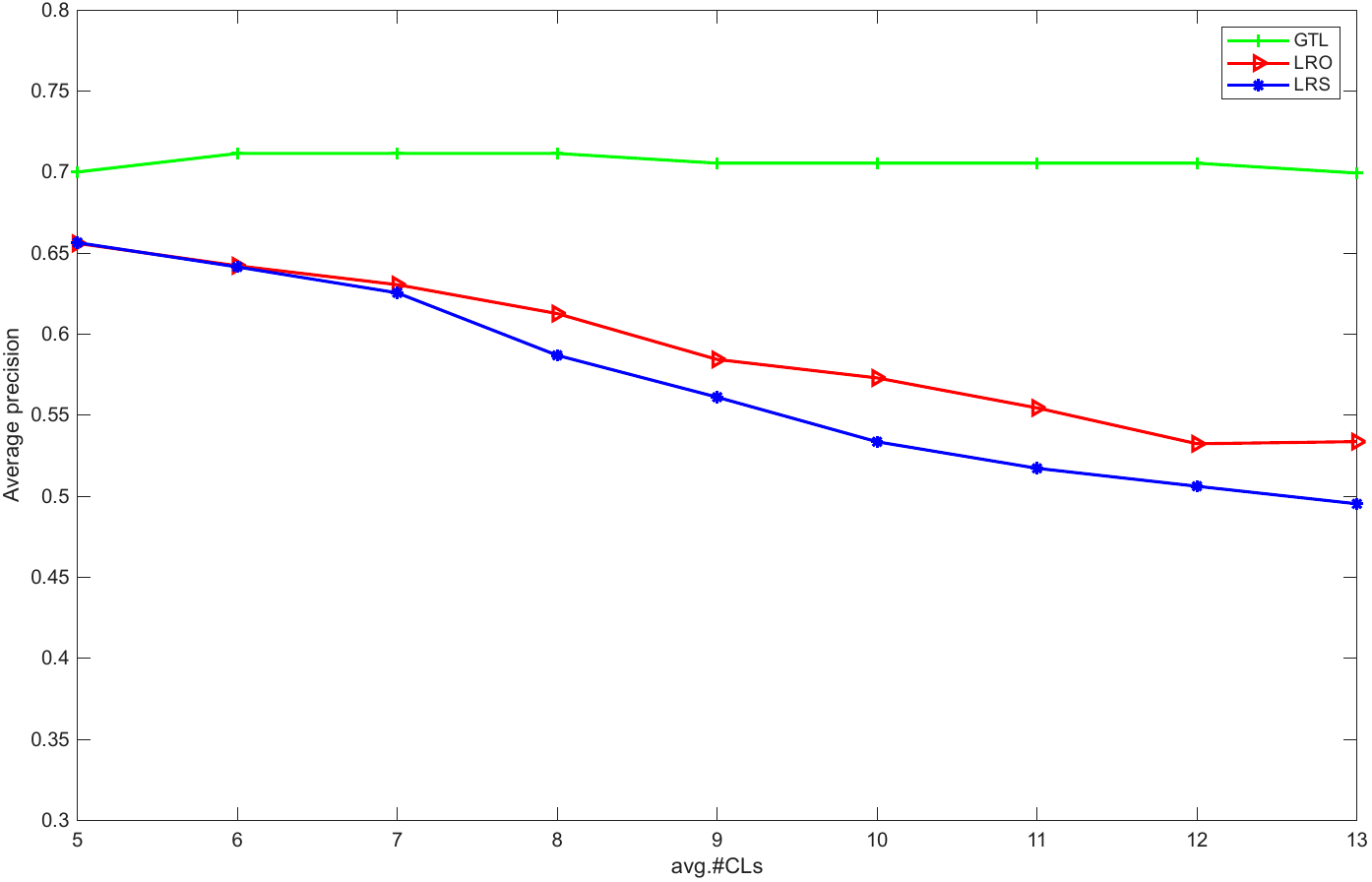}}
		\caption{{Comparative experiment of low rank orthogonal decomposition and low rank sparse decomposition on \textit{yeast} and \textit{reference} datasets.}}
		\label{fig_6}
	\end{figure}

	\section{Conclusion}\label{sec5}
	
	This paper proposes PML-MA, a partial multi-label learning method based 
	on feature-label modal alignment that effectively mitigates noisy label 
	impact. Our method integrates three key components: low-rank orthogonal 
	decomposition for pseudo-label generation, global-local modal alignment 
	for feature-label consistency, and multi-peak prototype learning with 
	soft membership supervision. Experiments on 30 dataset configurations show that PML-MA 
	significantly outperforms state-of-the-art methods with strong robustness. 
	{Despite these promising results, several limitations of the current framework motivate future research: (1)~PML-MA involves multiple trade-off parameters ($\lambda$, $\alpha$, $\beta$, $\gamma$) that require grid-search tuning for each dataset, incurring additional computational cost and limiting out-of-the-box applicability. Future work will explore \textit{adaptive parameter mechanisms} that automatically calibrate these parameters based on estimated noise levels and data characteristics. (2)~The linear projection matrices $\mathbf{P}_1$ and $\mathbf{P}_2$ have limited capacity to capture complex nonlinear feature-label relationships. Future work will investigate \textit{deep neural integration}, extending modal alignment to nonlinear architectures for end-to-end representation learning. (3)~PML-MA operates on a single feature modality, yet real-world multimedia data often involves heterogeneous sources (e.g., visual, textual, and audio). Future work will develop \textit{multi-view extensions} that align pseudo-labels with multiple feature modalities simultaneously. (4)~The per-iteration complexity of $\mathcal{O}(n^2d)$ poses scalability challenges for large-scale datasets. Future work will explore \textit{mini-batch optimization and approximate SVD techniques} to reduce computational overhead.}
	
	\bibliographystyle{IEEEtran}
	\bibliography{reference.bib}

@article{trends,
	title={The emerging trends of multi-label learning},
	author={Liu, Weiwei and Wang, Haobo and Shen, Xiaobo and Tsang, Ivor W},
	journal={IEEE transactions on pattern analysis and machine intelligence},
	volume={44},
	number={11},
	pages={7955--7974},
	year={2021},
	publisher={IEEE}
}

@inproceedings{cca,
	title={Canonical correlation analysis},
	author={Weenink, David},
	booktitle={Proceedings of the Institute of Phonetic Sciences of the University of Amsterdam},
	volume={25},
	pages={81--99},
	year={2003},
	organization={University of Amsterdam Amsterdam}
}

@article{pls,
	title={Partial least squares},
	author={Cha, Jaesung},
	journal={Adv. Methods Mark. Res},
	volume={407},
	pages={52--78},
	year={1994}
}

@article{multiview1,
	title={Asymptotics-aware multi-view subspace clustering},
	author={Xu, Yesong and Chen, Shuo and Li, Jun and Yang, Jian},
	journal={IEEE Transactions on Multimedia},
	volume={27},
	pages={3650--3663},
	year={2025},
	publisher={IEEE}
}

@article{multiview2,
	title={Nonconvex low-rank tensor representation for multi-view subspace clustering with insufficient observed samples},
	author={Ding, Meng and Yang, Jing-Hua and Zhao, Xi-Le and Zhang, Jie and Ng, Michael K},
	journal={IEEE Transactions on Knowledge and Data Engineering},
	year={2025},
	publisher={IEEE}
}

@article{ap1,
	title={Hierarchical text classification with multi-label contrastive learning and KNN},
	author={Zhang, Jun and Li, Yubin and Shen, Fanfan and He, Yueshun and Tan, Hai and He, Yanxiang},
	journal={Neurocomputing},
	volume={577},
	pages={127323},
	year={2024},
	publisher={Elsevier}
}

@inproceedings{ap2,
	title={Learning in imperfect environment: Multi-label classification with long-tailed distribution and partial labels},
	author={Zhang, Wenqiao and Liu, Changshuo and Zeng, Lingze and Ooi, Bengchin and Tang, Siliang and Zhuang, Yueting},
	booktitle={Proceedings of the IEEE/CVF International Conference on Computer Vision},
	pages={1423--1432},
	year={2023}
}

@article{ap3,
	title={A neural network-based multi-label classifier for protein function prediction},
	author={Tahzeeb, Shahab and Hasan, Shehzad},
	journal={Engineering, Technology \& Applied Science Research},
	volume={12},
	number={1},
	pages={7974--7981},
	year={2022}
}

@article{ranksvm2,
	title={Multi-kernel learning for multi-label classification with local Rademacher complexity},
	author={Wang, Zhenxin and Chen, Degang and Che, Xiaoya},
	journal={Information Sciences},
	volume={647},
	pages={119462},
	year={2023},
	publisher={Elsevier}
}

@inproceedings{pml-fp,
	title={Partial multi-label learning},
	author={Xie, Ming-Kun and Huang, Sheng-Jun},
	booktitle={Proceedings of the AAAI conference on artificial intelligence},
	pages={4302--4309},
	year={2018}
}

@inproceedings{lenfn,
	title={Partial Multi-label Learning Based On Near-Far Neighborhood Label Enhancement And Nonlinear Guidance},
	author={Chen, Yu and Wu, Yanan and Han, Na and Fang, Xiaozhao and Chen, Bingzhi and Wen, Jie},
	booktitle={Proceedings of the 32nd ACM International Conference on Multimedia},
	pages={3722--3731},
	year={2024}
}

@article{ml-knn,
	title={ML-KNN: A lazy learning approach to multi-label learning},
	author={Zhang, Min-Ling and Zhou, Zhi-Hua},
	journal={Pattern recognition},
	volume={40},
	number={7},
	pages={2038--2048},
	year={2007},
	publisher={Elsevier}
}

@article{lift,
	title={Lift: Multi-label learning with label-specific features},
	author={Zhang, Min-Ling and Wu, Lei},
	journal={IEEE transactions on pattern analysis and machine intelligence},
	volume={37},
	number={1},
	pages={107--120},
	year={2014},
	publisher={IEEE}
}

@inproceedings{pml-plr,
	title     = {Pseudo-Label Reconstruction for Partial Multi-Label Learning},
	author    = {Chen, Yu and Li, Fang and Han, Na and Li, Guanbin and Gao, Hongbo and Chan, Sixian and Fang, Xiaozhao},
	booktitle = {Proceedings of the Thirty-Fourth International Joint Conference on Artificial Intelligence, {IJCAI-25}},
	pages     = {4896--4904},
	year      = {2025}
}

@article{glc,
	title={Global-local label correlation for partial multi-label learning},
	author={Sun, Lijuan and Feng, Songhe and Liu, Jun and Lyu, Gengyu and Lang, Congyan},
	journal={IEEE Transactions on Multimedia},
	volume={24},
	pages={581--593},
	year={2021},
	publisher={IEEE}
}

@article{particle,
	title={Partial multi-label learning via credible label elicitation},
	author={Zhang, Min-Ling and Fang, Jun-Peng},
	journal={IEEE Transactions on Pattern Analysis and Machine Intelligence},
	volume={43},
	number={10},
	pages={3587--3599},
	year={2020},
	publisher={IEEE}
}

@inproceedings{drama,
	title={Discriminative and Correlative Partial Multi-Label Learning.},
	author={Wang, Haobo and Liu, Weiwei and Zhao, Yang and Zhang, Chen and Hu, Tianlei and Chen, Gang},
	booktitle={IJCAI},
	pages={3691--3697},
	year={2019}
}

@article{penad,
	title={Progressive Enhancement of Label Distributions for Partial Multilabel Learning},
	author={Xu, Ning and Liu, Yun-Peng and Zhang, Yan and Geng, Xin},
	journal={IEEE transactions on neural networks and learning systems},
	volume={34},
	number={8},
	pages={4856--4867},
	year={2023}
}

@article{pamb,
	title={Towards enabling binary decomposition for partial multi-label learning},
	author={Liu, Bing-Qing and Jia, Bin-Bin and Zhang, Min-Ling},
	journal={IEEE transactions on pattern analysis and machine intelligence},
	year={2023},
	publisher={IEEE}
}

@inproceedings{fpml,
	title={Feature-induced partial multi-label learning},
	author={Yu, Guoxian and Chen, Xia and Domeniconi, Carlotta and Wang, Jun and Li, Zhao and Zhang, Zili and Wu, Xindong},
	booktitle={2018 IEEE international conference on data mining (ICDM)},
	pages={1398--1403},
	year={2018},
	organization={IEEE}
}

@inproceedings{pml-lrs,
	title={Partial multi-label learning by low-rank and sparse decomposition},
	author={Sun, Lijuan and Feng, Songhe and Wang, Tao and Lang, Congyan and Jin, Yi},
	booktitle={Proceedings of the AAAI conference on artificial intelligence},
	volume={33},
	pages={5016--5023},
	year={2019}
}

@article{pmfs-lrs,
	title={Partial multi-label feature selection via low-rank and sparse factorization with manifold learning},
	author={Sun, Zhenzhen and Chen, Zexiang and Liu, Jinghua and Chen, Yewang and Yu, Yuanlong},
	journal={Knowledge-Based Systems},
	volume={296},
	pages={111899},
	year={2024},
	publisher={Elsevier}
}

@article{mll-tkde,
	title={Multi-label classification with high-rank and high-order label correlations},
	author={Si, Chongjie and Jia, Yuheng and Wang, Ran and Zhang, Min-Ling and Feng, Yanghe and Qu, Chongxiao},
	journal={IEEE Transactions on Knowledge and Data Engineering},
	volume={36},
	number={8},
	pages={4076--4088},
	year={2023},
	publisher={IEEE}
}

@article{mll-tkde2,
	title={Multi-dimensional classification via decomposed label encoding},
	author={Jia, Bin-Bin and Zhang, Min-Ling},
	journal={IEEE Transactions on Knowledge and Data Engineering},
	volume={35},
	number={2},
	pages={1844--1856},
	year={2021},
	publisher={IEEE}
}

@article{pml-salc,
	title={Partial multi-label learning based on sparse asymmetric label correlations},
	author={Zhao, Peng and Zhao, Shiyi and Zhao, Xuyang and Liu, Huiting and Ji, Xia},
	journal={Knowledge-Based Systems},
	volume={245},
	pages={108601},
	year={2022},
	publisher={Elsevier}
}

@inproceedings{muser,
	title={Partial multi-label learning via multi-subspace representation},
	author={Li, Ziwei and Lyu, Gengyu and Feng, Songhe},
	booktitle={Proceedings of the Twenty-Ninth International Conference on International Joint Conferences on Artificial Intelligence},
	pages={2612--2618},
	year={2021}
}

@article{pml-ni,
	title={Partial multi-label learning with noisy label identification},
	author={Xie, Ming-Kun and Huang, Sheng-Jun},
	journal={IEEE Transactions on Pattern Analysis and Machine Intelligence},
	volume={44},
	number={7},
	pages={3676--3687},
	year={2021},
	publisher={IEEE}
}

@article{pml-dndc,
	title={Dual Noise Elimination And Dynamic Label Correlation Guided Partial Multi-label Learning},
	author={Hu, Yan and Fang, Xiaozhao and Kang, Peipei and Chen, Yonghao and Fang, Yuting and Xie, Shengli},
	journal={IEEE Transactions on Multimedia},
	year={2023},
	publisher={IEEE}
}

@article{pml-nd,
	title={Negative Label and Noise Information Guided Disambiguation for Partial Multi-Label Learning},
	author={Zhong, Jingyu and Shang, Ronghua and Zhao, Feng and Zhang, Weitong and Xu, Songhua},
	journal={IEEE Transactions on Multimedia},
	year={2024},
	publisher={IEEE}
}

@inproceedings{pml-gan,
	title={Adversarial partial multi-label learning with label disambiguation},
	author={Yan, Yan and Guo, Yuhong},
	booktitle={Proceedings of the AAAI Conference on Artificial Intelligence},
	volume={35},
	pages={10568--10576},
	year={2021}
}

@inproceedings{pmlfs,
	title={Partial multi-label feature selection},
	author={Wang, Jing and Li, Peipei and Yu, Kui},
	booktitle={2022 International Joint Conference on Neural Networks (IJCNN)},
	pages={1--9},
	year={2022},
	organization={IEEE}
}

@article{pasad,
	title={Partial multi-label learning via specific label disambiguation},
	author={Li, Feng and Shi, Shengfei and Wang, Hongzhi},
	journal={Knowledge-Based Systems},
	volume={250},
	pages={109093},
	year={2022},
	publisher={Elsevier}
}

@article{pml-fsso,
	title={Partial multi-label feature selection via subspace optimization},
	author={Hao, Pingting and Hu, Liang and Gao, Wanfu},
	journal={Information Sciences},
	volume={648},
	pages={119556},
	year={2023},
	publisher={Elsevier}
}

@article{pard,
	title={Partial multi-label learning with probabilistic graphical disambiguation},
	author={Hang, Jun-Yi and Zhang, Min-Ling},
	journal={Advances in Neural Information Processing Systems},
	volume={36},
	pages={1339--1351},
	year={2023}
}

@article{pml-ed,
	title={PML-ED: A method of partial multi-label learning by using encoder-decoder framework and exploring label correlation},
	author={Wang, Zhenwu and Liu, Fanghan and Han, Mengjie and Tang, Hongjian and Wan, Benting},
	journal={Information Sciences},
	volume={661},
	pages={120165},
	year={2024},
	publisher={Elsevier}
}

@article{pml-bls,
	title={Partial multilabel learning using noise-tolerant broad learning system with label enhancement and dimensionality reduction},
	author={Qian, Wenbin and Tu, Yanqiang and Huang, Jintao and Shu, Wenhao and Cheung, Yiu-Ming},
	journal={IEEE Transactions on Neural Networks and Learning Systems},
	year={2024},
	publisher={IEEE}
}

@article{pml-lc,
	title={Partial multi-label learning with label and classifier correlations},
	author={Wang, Ke and Guan, Yahu and Xie, Yunyu and Jia, Zhaohong and Ye, Hong and Duan, Zhangling and Liang, Dong},
	journal={Information Sciences},
	volume={712},
	pages={122101},
	year={2025},
	publisher={Elsevier}
}

@article{fbd-pml,
	title={Fuzzy bifocal disambiguation for partial multi-label learning},
	author={Fang, Xiaozhao and Hu, Xi and Hu, Yan and Chen, Yonghao and Xie, Shengli and Han, Na},
	journal={Neural Networks},
	volume={185},
	pages={107137},
	year={2025},
	publisher={Elsevier}
}

@article{lcfs-pml,
	title={Integrating label confidence-based feature selection for partial multi-label learning},
	author={Han, Qingqi and Hu, Liang and Gao, Wanfu},
	journal={Pattern Recognition},
	volume={161},
	pages={111281},
	year={2025},
	publisher={Elsevier}
}

@article{pml-ldl,
	title={Partial multi-label feature selection based on label distribution learning},
	author={Lin, Yaojin and Li, Yulin and Lin, Shidong and Guo, Lei and Mao, Yu},
	journal={Pattern Recognition},
	volume={164},
	pages={111523},
	year={2025},
	publisher={Elsevier}
}

@article{cllfs,
	title={Learning accurate label-specific features from partially multilabeled data},
	author={Xu, Tiantian and Xu, Yuanyuan and Yang, Shiyu and Li, Binghao and Zhang, Wenjie},
	journal={IEEE Transactions on Neural Networks and Learning Systems},
	year={2023},
	publisher={IEEE}
}

@article{pmsne,
	title={Partial multi-label feature selection with feature noise},
	author={Wu, You and Li, Peipei and Zou, Yizhang},
	journal={Pattern Recognition},
	volume={162},
	pages={111310},
	year={2025},
	publisher={Elsevier}
}

@article{mll4,
	title={Multi-label classification: An overview},
	author={Tsoumakas, Grigorios and Katakis, Ioannis},
	journal={Data Warehousing and Mining: Concepts, Methodologies, Tools, and Applications},
	pages={64--74},
	year={2008},
	publisher={IGI Global}
}

@inproceedings{mll5,
	title={Multi-label manifold learning},
	author={Hou, Peng and Geng, Xin and Zhang, Min-Ling},
	booktitle={Proceedings of the AAAI conference on artificial intelligence},
	volume={30},
	number={1},
	year={2016}
}

@article{mll6,
	title={Manifold regularized discriminative feature selection for multi-label learning},
	author={Zhang, Jia and Luo, Zhiming and Li, Candong and Zhou, Changen and Li, Shaozi},
	journal={Pattern Recognition},
	volume={95},
	pages={136--150},
	year={2019},
	publisher={Elsevier}
}

@ARTICLE{mll-tkde3,
	author={Zhu, Yue and Kwok, James T. and Zhou, Zhi-Hua},
	journal={IEEE Transactions on Knowledge and Data Engineering}, 
	title={Multi-Label Learning with Global and Local Label Correlation}, 
	year={2018},
	volume={30},
	number={6},
	pages={1081-1094},
}

@article{pll-0,
	title={Partial label learning based on disambiguation correction net with graph representation},
	author={Fan, Jinfu and Yu, Yang and Wang, Zhongjie and Gu, Jinyi},
	journal={IEEE Transactions on Circuits and Systems for Video Technology},
	volume={32},
	number={8},
	pages={4953--4967},
	year={2021},
	publisher={IEEE}
}

@article{pll-tkde,
	title={Learning from noisy labels via dynamic loss thresholding},
	author={Yang, Hao and Jin, You-Zhi and Li, Zi-Yin and Wang, Deng-Bao and Geng, Xin and Zhang, Min-Ling},
	journal={IEEE Transactions on Knowledge and Data Engineering},
	year={2023},
	publisher={IEEE}
}

@inproceedings{pll-2,
	title={Adaptive graph guided disambiguation for partial label learning},
	author={Wang, Deng-Bao and Li, Li and Zhang, Min-Ling},
	booktitle={Proceedings of the 25th ACM SIGKDD international conference on knowledge discovery \& data mining},
	pages={83--91},
	year={2019}
}

@article{pll-3,
	title={Variational Label Enhancement for Instance-Dependent Partial Label Learning},
	author={Xu, Ning and Qiao, Congyu and Zhao, Yuchen and Geng, Xin and Zhang, Min-Ling},
	journal={IEEE Transactions on Pattern Analysis and Machine Intelligence},
	year={2024},
	publisher={IEEE}
}

@article{pll-5,
	title={Disambiguation-free partial label learning},
	author={Zhang, Min-Ling and Yu, Fei and Tang, Cai-Zhi},
	journal={IEEE Transactions on Knowledge and Data Engineering},
	volume={29},
	number={10},
	pages={2155--2167},
	year={2017},
	publisher={IEEE}
}

@article{demvsar2006statistical,
	title={Statistical comparisons of classifiers over multiple data sets},
	author={Dem{\v{s}}ar, Janez},
	journal={The Journal of Machine learning research},
	volume={7},
	pages={1--30},
	year={2006},
	publisher={JMLR. org}
}

@article{briggs2012acoustic,
	title={Acoustic classification of multiple simultaneous bird species: A multi-instance multi-label approach},
	author={Briggs, Forrest and Lakshminarayanan, Balaji and Neal, Lawrence and Fern, Xiaoli Z and Raich, Raviv and Hadley, Sarah JK and Hadley, Adam S and Betts, Matthew G},
	journal={The Journal of the Acoustical Society of America},
	volume={131},
	number={6},
	pages={4640--4650},
	year={2012},
	publisher={AIP Publishing}
}

@inproceedings{snoek2006challenge,
	title={The challenge problem for automated detection of 101 semantic concepts in multimedia},
	author={Snoek, Cees GM and Worring, Marcel and Van Gemert, Jan C and Geusebroek, Jan-Mark and Smeulders, Arnold WM},
	booktitle={Proceedings of the 14th ACM international conference on Multimedia},
	pages={421--430},
	year={2006}
}

@inproceedings{trohidis2008multi,
	title={Multi-label classification of music into emotions.},
	author={Trohidis, Konstantinos and Tsoumakas, Grigorios and Kalliris, George and Vlahavas, Ioannis P and others},
	booktitle={ISMIR},
	volume={8},
	pages={325--330},
	year={2008}
}

@inproceedings{huiskes2008mir,
	title={The mir flickr retrieval evaluation},
	author={Huiskes, Mark J and Lew, Michael S},
	booktitle={Proceedings of the 1st ACM international conference on Multimedia information retrieval},
	pages={39--43},
	year={2008}
}

@article{zhang2013review,
	title={A review on multi-label learning algorithms},
	author={Zhang, Min-Ling and Zhou, Zhi-Hua},
	journal={IEEE transactions on knowledge and data engineering},
	volume={26},
	number={8},
	pages={1819--1837},
	year={2013},
	publisher={IEEE}
}
	\newpage
	\appendices
	\section{Optimization}
	
	\subsection{Overall Objective Function}	
	\begin{equation}\label{eqx}
		\begin{aligned}
			&\begin{aligned}
				\min_{\substack{\mathbf{P}_1,\mathbf{P}_2,\mathbf{Q},\\ 
						\mathbf{R},\mathbf{V},\mathbf{D}}}~
				&{\|\mathbf{X}\mathbf{P}_1- \mathbf{R}\mathbf{P}_2\|_F^2+\|\mathbf{Y}- \mathbf{R}\mathbf{Q}^\top\|_{F}^{2}+\lambda\|\mathbf{R}\|_*}\\
				&+\alpha\sum_{i=1}^{n}\sum_{j=1}^{n}
				s_{ij}\|\mathbf{P}_1^\top \mathbf{x}_{i}-\mathbf{P}_2^\top \mathbf{r}_{j}\|_{2}^{2}\\
				&+\beta\sum_{i=1}^{n}\left\|\mathbf{x}_id_{ii}-\sum_{j=1}^{c}r_{ij}\mathbf{v}_j\right\|_{2}^{2}+\gamma\|\mathbf{P}_2\mathbf{P}_1^\top \|_{F}^{2},\\
			\end{aligned}\\
			&\text{s.t.} ~~\mathbf{Q}^{\top}\mathbf{Q}=\mathbf{I}_{c}, ~\mathbf{P}_2^{\top}\mathbf{P}_2=\mathbf{I}_{m},~0\leq r_{ij}\leq y_{ij},\forall i,j,\\
		\end{aligned}
	\end{equation}
	\subsection{Optimization}	
	
	We employ an alternating optimization approach to solve Eq. \eqref{eqx}. In this method, each variable is optimized while keeping the other variables fixed, and all variables are updated iteratively in turns. This process continues until the loss function converges.
	
	\subsubsection{Update \texorpdfstring{$\mathbf{P}_1$ and $\mathbf{P}_2$}{P}}
	Removing the terms that are irrelevant to $\mathbf{P}_1$ and $\mathbf{P}_2$, the suboptimization problem derived from Eq. \eqref{eqx} is simplified as:
	\begin{equation}\label{p1p2}
		\begin{aligned}
			&\min_{\mathbf{P}_1,\mathbf{P}_2}~~
			\|\mathbf{X}\mathbf{P}_1-\mathbf{R}\mathbf{P}_2\|_F^2
			+\alpha\sum_{i=1}^{n}\sum_{j=1}^{n}s_{ij}
			\|\mathbf{P}_1^\top \mathbf{x}_{i}-\mathbf{P}_2^\top \mathbf{r}_{j}\|_{2}^{2}\\
			&+\gamma\|\mathbf{P}_2\mathbf{P}_1^\top \|_{F}^{2},\quad\text{s.t.}\quad\mathbf{P}_2^\top\mathbf{P}_2=\mathbf{I}_{m}.
		\end{aligned}
	\end{equation}
	
	The local alignment term in Eq. \eqref{p1p2} can be converted to trace form:
	\begin{equation}\label{lma}
		\begin{aligned}		
			&\sum_{i=1}^{n}\sum_{j=1}^{n}
			s_{ij}\|\mathbf{P}_1^\top \mathbf{x}_{i}-\mathbf{P}_2^\top \mathbf{r}_{j}\|_{2}^{2}\\
			&=Tr(\mathbf{P}_1^\top \mathbf{X}^\top \mathbf{D}^s\mathbf{X}\mathbf{P}_1)
			+Tr(\mathbf{P}_2^\top \mathbf{R}^\top\mathbf{D}^s\mathbf{R}\mathbf{P}_2)\\
			&\quad-2Tr(\mathbf{P}_1^\top \mathbf{X}^\top\mathbf{S}\mathbf{R}\mathbf{P}_2),
		\end{aligned}
	\end{equation}
	where $Tr(\cdot)$ denotes the trace norm, and $\mathbf{D}^s$ is a diagonal degree matrix with $\mathbf{D}^s_{ii}=\sum_{j=1}^{n}\mathbf{S}_{ij}$. Since $\mathbf{S}$ is symmetric, the row and column degree matrices are identical.
	
	\textbf{Update $\mathbf{P}_1$:} Taking the derivative of Eq. \eqref{p1p2} with respect to $\mathbf{P}_1$ and setting it to zero yields:
	\begin{equation}\label{p1}
		\underbrace{\mathbf{X}^\top(\mathbf{X} +\alpha\mathbf{D^sX}) }_{A_1}\mathbf{P}_1+\mathbf{P}_1\underbrace{\beta \mathbf{P}_2^\top \mathbf{P}_2}_{B_1}
		=\underbrace{\mathbf{X}^\top(\mathbf{RP}_2+\alpha\mathbf{SRP}_2)}_{C_1}
	\end{equation}
	Rearranging Eq. \eqref{p1}, we obtain the \textit{Sylvester} equation for $\mathbf{P}_1$:
	\begin{equation}\label{p11}
		\mathbf{A}_1\mathbf{P}_1 + \mathbf{P}_1\mathbf{B}_1 = \mathbf{C}_1,
	\end{equation}
	Then the final solution of $\mathbf{P}_1$ can be achieved by  the \textit{lyap} or \textit{sylvester} function in MATLAB:
	\begin{equation}\label{p111}
		\mathbf{P}_1^{t+1} = sylvester(\mathbf{A}_1,\mathbf{B}_1,\mathbf{C}_1).
	\end{equation}
	
	\textbf{Update $\mathbf{P}_2$:} Similarly, taking the derivative with respect to $\mathbf{P}_2$ and setting it to zero:
	\begin{equation}\label{p2}
		\underbrace{\mathbf{R}^\top(\mathbf{R} +\alpha\mathbf{D^sR}) }_{A_2}\mathbf{P}_2+\mathbf{P}_2\underbrace{\beta \mathbf{P}_1^\top \mathbf{P}_1}_{B_2}
		=\underbrace{\mathbf{R}^\top(\mathbf{XP}_1+\alpha\mathbf{SXP}_1)}_{C_2}
	\end{equation}
	This leads to the Sylvester equation for $\mathbf{P}_2$:
	\begin{equation}\label{p22}
		\hat{\mathbf{P}}_2 = sylvester(\mathbf{A}_1,\mathbf{B}_1,\mathbf{C}_1).
	\end{equation}
	
	To satisfy the orthogonality constraint $\mathbf{P}_2^\top\mathbf{P}_2=\mathbf{I}_{m}$, we project $\hat{\mathbf{P}}_2$ onto the orthogonal manifold via singular value decomposition (SVD). Let $\hat{\mathbf{P}}_2 = \mathbf{U}_P\mathbf{\Sigma}_P\mathbf{V}_P^\top$ be the SVD of $\hat{\mathbf{P}}_2$, then the optimal orthogonal solution is:
	\begin{equation}\label{p222}
		\mathbf{P}_2^{t+1}=\mathbf{U}_p\mathbf{V}_P^\top .
	\end{equation}
	\subsubsection{Update \texorpdfstring{$\mathbf{Q}$}{P}}
	Removing the items that are irrelevant to $\mathbf{Q}$, and fix $\mathbf{R}$.
	Then, the suboptimization problem derived in Eq. \eqref{eqx} is simplified as:
	\begin{equation}\label{q1}
		\min_{\substack{\mathbf{Q}}}~\|\mathbf{Y}-\mathbf{R}\mathbf{Q}^\top \|_{F}^{2}, \quad\text{s.t.}\quad \mathbf{Q}^{\top}\mathbf{Q}=\mathbf{I_{c}}.
	\end{equation}
	Since $\|\mathbf{Y}-\mathbf{R}\mathbf{Q}^\top\|_{F}^{2}=Tr(\mathbf{Y}^\top \mathbf{Y}+\mathbf{R}\mathbf{Q}^\top\mathbf{QR}^\top-2\mathbf{Q}\mathbf{R}^\top \mathbf{Y})$ and $\mathbf{Q}^\top\mathbf{Q} =\mathbf{I}_{c}$, for $\mathbf{Y}^\top \mathbf{Y}$ and $\mathbf{R}\mathbf{Q}^\top\mathbf{QR}^\top$, these values are constant. Alternatively, the optimization problems in Eq. \eqref{q1} can be stated as maximizing the following trace functions:
	\begin{equation}\label{q2}
		\begin{aligned}
			&\max_{\substack{\mathbf{Q}}}~Tr(\mathbf{Q}\mathbf{R}^\top \mathbf{Y}),
			&\text{s.t.} ~~\mathbf{Q}^{\top}\mathbf{Q}=\mathbf{I}_{c}.\\
		\end{aligned}
	\end{equation}
	Similarly, the optimal solutions of $\mathbf{Q}$ can be approximated by respectively computing the SVD of $\mathbf{R}^\top\mathbf{Y} \approx\mathbf{U}_Q\mathbf{\Sigma}_Q\mathbf{V}_Q^\top$, the final solution of $\mathbf{Q}$ can be achieved by:
	\begin{equation}\label{q3}
		\mathbf{Q}=\mathbf{V}_Q\mathbf{U}_Q^\top.
	\end{equation}
	\subsubsection{Update \texorpdfstring{$\mathbf{V}$}{P}}
	Removing the items that are irrelevant to $\mathbf{V}$, and fix $\mathbf{R}$ and $\mathbf{D}$.
	Then, the suboptimization problem derived in Eq. \eqref{eqx} is simplified as:
	\begin{equation}\label{v1}
		\begin{aligned}
			\min_{\substack{\mathbf{V}}}~\sum_{i=1}^{n}\left\|\mathbf{x}_id_{ii}-\sum_{j=1}^{c}r_{ij}\mathbf{v}_j\right\|_{2}^{2}=
			\|\mathbf{X}^\top\mathbf{D} -\mathbf{VR}^\top\|_{F}^{2}.\\
		\end{aligned}
	\end{equation}
	After differentiating Eq. \eqref{v1}, set the derivative to 0, the updated formula for $\mathbf{V}$ can be obtained as follows:
	\begin{equation}\label{eq24}
		\mathbf{V}^{t+1}=\mathbf{X}_1^\top \mathbf{DR}(\mathbf{R}^\top\mathbf{R})^{-1}.\\
	\end{equation}
	or
	\begin{equation}\label{eq:update_prototypes}
		\mathbf{v}_j^{t+1} = \frac{\sum_{i=1}^{n} r_{ij} \mathbf{x}_i}{\sum_{i=1}^{n}  r_{ij}}, \quad j = 1, 2, \ldots, q.
	\end{equation}
	\subsubsection{Update \texorpdfstring{$\mathbf{R}$ and $\mathbf{D}$}{R and D}}
	Removing the terms that are irrelevant to $\mathbf{R}$, and fixing $\mathbf{P}_1$, $\mathbf{P}_2$, $\mathbf{Q}$ and $\mathbf{V}$. First, we update the scalar weight matrix $\mathbf{D}$ using the result of the current iteration round $\mathbf{R}^{t}$:
	\begin{equation}\label{update_d}
		d_{ii}^{t+1}=\sum_{j=1}^{c}r_{ij}^{t}.
	\end{equation}
	
	Then, the suboptimization problem for $\mathbf{R}$ derived from Eq. \eqref{eqx} is:
	\begin{equation}\label{r}
		\begin{aligned}
			&\min_{\mathbf{R}}~
			\|\mathbf{X}\mathbf{P}_1- \mathbf{R}\mathbf{P}_2\|_F^2+\|\mathbf{Y}- \mathbf{R}\mathbf{Q}^\top\|_{F}^{2}+\lambda\|\mathbf{R}\|_*\\
			&+\alpha\sum_{i=1}^{n}\sum_{j=1}^{n}
			s_{ij}\|\mathbf{P}_1^\top \mathbf{x}_{i}-\mathbf{P}_2^\top \mathbf{r}_{j}\|_{2}^{2}
			+\beta\|\mathbf{X}^\top\mathbf{D} -\mathbf{VR}^\top\|_{F}^{2}\\
			&\text{s.t.} \quad~0\leq r_{ij}\leq y_{ij},~\forall i,j.
		\end{aligned}
	\end{equation}
	
	Since the objective function contains both differentiable and non-differentiable terms, we employ the proximal gradient descent (PGD) method. We split the objective function $\mathcal{L}(\mathbf{R})$ into a differentiable part $\mathcal{J}(\mathbf{R})$ and a non-differentiable part $\mathcal{K}(\mathbf{R})$:
	\begin{equation}\label{eq26}
		\mathcal{L}(\mathbf{R})=\mathcal{J}(\mathbf{R})+\mathcal{K}(\mathbf{R}),
	\end{equation}
	where
	\begin{equation}\label{eq27}
		\mathcal{K}(\mathbf{R}) =\lambda\|\mathbf{R}\|_*.
	\end{equation}
	
	The gradient of $\mathcal{J}(\mathbf{R})$ is:
	\begin{equation}\label{gradient_R}
		\begin{aligned}
			\nabla \mathcal{J}(\mathbf{R}) =& 
			2\mathbf{R}\mathbf{P}_2\mathbf{P}_2^\top - 2\mathbf{X}\mathbf{P}_1\mathbf{P}_2^\top
			+ 2\mathbf{R}\mathbf{Q}^\top\mathbf{Q} - 2\mathbf{Y}\mathbf{Q}\\
			&+ 2\alpha\mathbf{D}^s\mathbf{R}\mathbf{P}_2\mathbf{P}_2^\top - 2\alpha\mathbf{S}\mathbf{X}\mathbf{P}_1\mathbf{P}_2^\top\\
			&+ 2\beta\mathbf{R}\mathbf{V}^\top\mathbf{V} - 2\beta\mathbf{D}\mathbf{X}\mathbf{V}.
		\end{aligned}
	\end{equation}
	
	The proximal operator at iteration $t$ is:
	\begin{equation}\label{eq28}
		\mathrm{prox}_{\mathcal{K}}(\mathbf{Z})=\arg\min_{\mathbf{R}} \frac{1}{2}\|\mathbf{R}-\mathbf{Z}\|_{F}^{2}+\frac{\lambda}{L}\|\mathbf{R}\|_{*},
	\end{equation}
	where $\mathbf{Z} = \mathbf{R}^t - \frac{1}{L} \nabla \mathcal{J}(\mathbf{R}^t)$, and $L$ is the Lipschitz constant of $\nabla \mathcal{J}(\mathbf{R})$.
	Eq. \eqref{eq28} can be solved using the singular value thresholding (SVT) operator. By performing SVD on $\mathbf{Z}=\mathbf{U}_Z\mathbf{\Sigma}_Z\mathbf{V}_Z^\top$, we obtain:
	\begin{equation}\label{svt}
		{
			\hat{\mathbf{R}}=\mathbf{U}_Z~\mathcal{S}_{\lambda/{L}}(\mathbf{\Sigma}_Z)~\mathbf{V}_Z^\top,	}
	\end{equation}
	{where $\mathcal{S}_{\lambda/{L}}(\mathbf{\Sigma}_Z)$ is the soft-thresholded singular value matrix with diagonal entries:}
	\begin{equation}\label{threshold}
		{\mathcal{S}_{\lambda/{L}}(\mathbf{\Sigma}_Z)_{ii}=\max\left((\mathbf{\Sigma}_Z)_{ii}-\frac{\lambda}{L},0\right).}
	\end{equation}
	
	To compute the Lipschitz constant $L$, we analyze the gradient difference. {Note that the $\mathbf{R}$-dependent terms in $\nabla\mathcal{J}(\mathbf{R})$ involve right-multiplication by $c\times c$ matrices ($\mathbf{P}_2\mathbf{P}_2^\top$, $\mathbf{Q}^\top\mathbf{Q}$, $\mathbf{V}^\top\mathbf{V}$) and left-multiplication by the $n\times n$ matrix $\mathbf{D}^s$. Thus:}
	\begin{equation}\label{lipschitz2}
		\begin{aligned}
			&{\|\nabla\mathcal{J}(\mathbf{R}_1)-\nabla\mathcal{J}(\mathbf{R}_2)\|_F}\\
			&{\leq 2\|\mathbf{P}_2\mathbf{P}_2^\top + \mathbf{Q}^\top\mathbf{Q} + \beta\mathbf{V}^\top\mathbf{V}\|_2\cdot\|\Delta\mathbf{R}\|_F}\\
			&{\quad+ 2\alpha\|\mathbf{D}^s\|_2\cdot\|\mathbf{P}_2\mathbf{P}_2^\top\|_2\cdot\|\Delta\mathbf{R}\|_F.}
		\end{aligned}
	\end{equation}
	{Then the Lipschitz constant $L$ is:}
	\begin{equation}\label{L_value}
		{L=2\left(\|\mathbf{P}_2\mathbf{P}_2^\top + \mathbf{Q}^\top\mathbf{Q} + \beta\mathbf{V}^\top\mathbf{V}\|_2 + \alpha\cdot\max_i D^s_{ii}\right).}
	\end{equation}
	
	Finally, to enforce the box constraint $0\leq r_{ij}\leq y_{ij}$, we decompose it into two non-negative constraints: $\mathbf{R}\geq 0$ and $\mathbf{Y}-\mathbf{R}\geq 0$. We apply the Lagrange multiplier method with the following formulation:
	\begin{equation}\label{eq32}
		\min_{\mathbf{R}}\|\mathbf{R}-\hat{\mathbf{R}}\|_F^2-Tr(\mathbf{\Phi}^\top\mathbf{R})-Tr(\mathbf{\Psi}^\top(\mathbf{Y}-\mathbf{R})),
	\end{equation}
	where $\mathbf{\Phi}$ and $\mathbf{\Psi}$ represent the Lagrange multiplier. Taking the derivative of Eq. \eqref{eq32} and setting the derivative to zero. Then
	multiplying both sides of {Eq. \eqref{eq34}} element-wise by \( \mathbf{R} \), the equation still holds, and the updated equation becomes:
	\begin{equation}\label{eq34}
		\mathbf{R}\odot(\mathbf{R}-\hat{\mathbf{R}}-\mathbf{\Phi}+\mathbf{\Psi})=0.
	\end{equation}
	Based on condition of Karush-Kuhn-Tucker (KKT), it can
	be given that: $\mathbf{\Phi}_{ij}\mathbf{R}_{ij}=0$ and $\mathbf{\Psi}_{ij}\mathbf{(R-Y)}_{ij}=0$
	the update rules for $\mathbf{R}$ can be obtained as:
	\begin{equation}\label{eq35}
		\mathbf{R}_{ij}^{t+1}=\mathbf{R}_{ij}^{t}\frac{\hat{\mathbf{R}}_{ij}}{\mathbf{R}_{ij}+\epsilon}-\frac{\mathbf{\Psi}_{ij}\mathbf{Y}_{ij}}{\mathbf{R}_{ij}+\epsilon},
	\end{equation}
	where $\epsilon$ is a small positive constant (e.g., $10^{-10}$) to prevent numerical instability. The update for \( \mathbf{\Psi} \) is based on the complementary condition: $\mathbf{\Psi}_{ij}=\max(0,\mathbf{R}_{ij}-\mathbf{Y}_{ij})$.
	\begin{algorithm}[htbp]
		\caption{PML-MA: Feature-Label Modal Alignment for Partial Multi-Label Learning}
		\label{alg:pml-ma}
		\renewcommand{\algorithmicrequire}{\textbf{Input:}}
		\renewcommand{\algorithmicensure}{\textbf{Output:}}
		\begin{algorithmic}[1]
			\REQUIRE 
			Feature matrix $\mathbf{X} \in \mathbb{R}^{n \times d}$, 
			Candidate label matrix $\mathbf{Y} \in \{0,1\}^{n \times c}$,
			Parameters $\lambda, \alpha, \beta, \gamma$,
			Number of neighbors $k$,
			Common subspace dimension $m$,
			Maximum iterations $T_{max}$, tolerance $\epsilon$.
			\ENSURE 
			Classifier $\mathbf{W} = \mathbf{P}_1\mathbf{P}_2^\top$.
			\STATE \textbf{Initialization:}
			\STATE Compute similarity matrix $\mathbf{S}$ using Eq. (5)(In the main text) and degree matrix $\mathbf{D}^s$;
			\STATE Initialize $\mathbf{P}_1 \in \mathbb{R}^{d \times m}$, $\mathbf{P}_2 \in \mathbb{R}^{c \times m}$ randomly with orthonormal columns;
			\STATE Initialize $\mathbf{Q} = \mathbf{I}_c$, $\mathbf{R} = \mathbf{Y}$, $\mathbf{V} \in \mathbb{R}^{d \times c}$ randomly;
			\STATE Initialize scalar weight matrix $\mathbf{D}$ with $d_{ii} = \sum_{j=1}^{c} r_{ij}$;
			\STATE Set $t = 0$, $\mathcal{L}^{old} = \infty$;
			\WHILE{$t < T_{max}$ and not converged}

			\STATE // \textbf{Update $\mathbf{P}_2$} {(Eqs.~\eqref{p2}--\eqref{p222})}
			\STATE Compute $\mathbf{A}_2 = \mathbf{R}^\top(\mathbf{I}_n + \alpha\mathbf{D}^s)\mathbf{R}$;
			\STATE Compute $\mathbf{B}_2 = \gamma\mathbf{P}_1^\top\mathbf{P}_1$;
			\STATE Compute $\mathbf{C}_2 = \mathbf{R}^\top(\mathbf{I}_n + \alpha\mathbf{S})\mathbf{X}\mathbf{P}_1$;
			\STATE Solve Sylvester equation: \\$\hat{\mathbf{P}}_2 = \text{sylvester}(\mathbf{A}_2, \mathbf{B}_2, \mathbf{C}_2)$;
			\STATE Perform SVD: $\hat{\mathbf{P}}_2 = \mathbf{U}_P\mathbf{\Sigma}_P\mathbf{V}_P^\top$;
			\STATE Project to orthogonal manifold: $\mathbf{P}_2^{t+1} = \mathbf{U}_P\mathbf{V}_P^\top$;
			
			\STATE // \textbf{Update $\mathbf{P}_1$} {(Eqs.~\eqref{p1}--\eqref{p111})}
			\STATE Compute $\mathbf{A}_1 = \mathbf{X}^\top(\mathbf{I}_n + \alpha\mathbf{D}^s)\mathbf{X}$;
			\STATE Compute $\mathbf{B}_1 = \gamma\mathbf{P}_2^\top\mathbf{P}_2$;
			\STATE Compute $\mathbf{C}_1 = \mathbf{X}^\top(\mathbf{I}_n + \alpha\mathbf{S})\mathbf{R}\mathbf{P}_2$;
			\STATE Solve Sylvester equation:\\ ${\mathbf{P}}_1^{t+1} = \text{sylvester}(\mathbf{A}_1, \mathbf{B}_1, \mathbf{C}_1)$;
			
			\STATE // \textbf{Update $\mathbf{Q}$} {(Eqs.~\eqref{q1}--\eqref{q3})}
			\STATE Compute $\mathbf{Q} = \mathbf{R}^\top\mathbf{Y}$;
			\STATE Perform SVD: $\mathbf{Q} = \mathbf{U}_Q\mathbf{\Sigma}_Q\mathbf{V}_Q^\top$;
			\STATE Project to orthogonal manifold: $\mathbf{Q}^{t+1} = \mathbf{V}_Q\mathbf{U}_Q^\top$;
			
			\STATE // \textbf{Update $\mathbf{V}$} {(Eq.~\eqref{eq24})}
			\STATE Compute $\mathbf{V}^{t+1} = (\mathbf{R}^\top\mathbf{R})^{-1}\mathbf{R}^\top\mathbf{D}\mathbf{X}$;
			
			\STATE // \textbf{Update $\mathbf{R}$} {(Eqs.~\eqref{gradient_R}--\eqref{eq35})}
			\STATE Update scalar weight matrix: $d_{ii} = \sum_{j=1}^{c} r_{ij}$;
			\STATE Compute gradient $\nabla \mathcal{J}(\mathbf{R})$ using Eq. {\eqref{gradient_R}};
			\STATE Compute Lipschitz constant $L$ using Eq. {\eqref{L_value}};
			\STATE Compute $\mathbf{Z} = \mathbf{R} - \frac{1}{L}\nabla \mathcal{J}(\mathbf{R})$;
			\STATE Perform SVD: $\mathbf{Z} = \mathbf{U}_Z\mathbf{\Sigma}_Z\mathbf{V}_Z^\top$;
			\STATE Apply soft thresholding: $\mathcal{S}_{\lambda/{L}}(\mathbf{\Sigma}_Z)_{ii} = \max(\mathbf{\Sigma}_Z - \frac{\lambda}{L}, 0)$;
			\STATE Reconstruct: $\hat{\mathbf{R}} = \mathbf{U}_Z~\mathcal{S}_{\lambda/{L}}(\mathbf{\Sigma}_Z)\mathbf{V}_Z^\top$;
			\STATE Apply box constraints {(Eq.~\eqref{eq35})}:\\ $	\mathbf{R}_{ij}^{t+1}=\mathbf{R}_{ij}^{t}\frac{\hat{\mathbf{R}}_{ij}}{\mathbf{R}_{ij}+\epsilon}-\frac{\mathbf{\Psi}_{ij}\mathbf{Y}_{ij}}{\mathbf{R}_{ij}+\epsilon}$;
			
			\STATE // \textbf{Update $\mathbf{D}$}
			\STATE Update scalar weight matrix: $d_{ii} = \sum_{j=1}^{c} r_{ij}$, $\forall i$;
			
			\STATE // \textbf{Check Convergence}
			\STATE Compute objective value $\mathcal{L}^{new}$ using Eq. (14);
			\IF{$|\mathcal{L}^{new} - \mathcal{L}^{old}|/|\mathcal{L}^{old}| < \epsilon$}
			\STATE \textbf{break};
			\ENDIF
			\STATE $\mathcal{L}^{old} = \mathcal{L}^{new}$, $t = t + 1$;
			\ENDWHILE
			\STATE Construct final classifier: $\mathbf{W} = \mathbf{P}_2\mathbf{P}_1^\top$;
			\RETURN $\mathbf{W}$;
		\end{algorithmic}
	\end{algorithm}
	\section{Theoretical Justification of Theorem 1}
	\subsection{Complexity Analysis}	
	In this section, we analyze the per-iteration computational complexity of the proposed PML-MA method, which mainly arises from solving the alternating optimization problems for six variables: $\mathbf{P}_1$, $\mathbf{P}_2$, $\mathbf{Q}$, $\mathbf{R}$, $\mathbf{V}$, and $\mathbf{D}$. The computational costs at each iteration are as follows: updating $\mathbf{P}_1$ requires computing $\mathbf{X}^\top(\mathbf{I}_n + \alpha\mathbf{D}^s)\mathbf{X}$ and solving a Sylvester equation of size $d\times m$, yielding $\mathcal{O}(n^2d + nd^2 + d^2m + ncm)$; updating $\mathbf{P}_2$ involves similar operations with additional SVD projection, costing $\mathcal{O}(n^2c + nc^2 + cm^2 + ndm)$; updating $\mathbf{Q}$ requires matrix multiplication and SVD for orthogonality constraint, costing $\mathcal{O}(nc^2 + c^3)$; updating $\mathbf{V}$ involves computing class prototypes through matrix operations, costing $\mathcal{O}(nc^2 + ndc + c^3)$; updating $\mathbf{R}$ via proximal gradient descent with SVD-based soft thresholding costs $\mathcal{O}(nc^2 + ndc + ncm + c^3)$; and updating $\mathbf{D}$ by row summation costs $\mathcal{O}(nc)$. Since $c, m \ll n, d$ in typical multi-label scenarios, the dominant terms per iteration are $\mathcal{O}(n^2d + nd^2 + n^2c + nc^2 + ndc)$. Therefore, for $T$ iterations until convergence, the total computational complexity of PML-MA is $\mathcal{O}(T(n^2d + nd^2 + n^2c + nc^2 + ndc))$, where $T$ is typically small (e.g., $T \leq 20$ as demonstrated in {Fig. 5 in the main paper}), ensuring practical efficiency for real-world applications.
	\subsection{Generalization Bound}	
	This section theoretically analyzes the Rademacher complexity for PML-MA, a common tool for deriving data-dependent risk bounds. 
	\begin{theorem}\label{thm1}  
		Assume that \( \mathcal{G} \) is a class of functions mapping \( \mathbf{X} \) to \( [0,1] \), and \( \mathbf{Z} = \{x_1, x_2, \dots, x_n\} \) represents a fixed sample of size \( n \). The empirical Rademacher complexity of \( \mathcal{G} \) for sample \( \mathbf{Z} \) is given by:  
		\begin{equation}  
			\widehat{\mathcal{R}}_Z(\mathcal{G}) = \mathbb{E}_{\sigma} \left[ \sup_{g \in \mathcal{G}} \frac{1}{n} \sum_{i=1}^n \sigma_i g(x_i) \right],  
		\end{equation}  
		where \( \sigma_i \) is a random variable from \( \{ -1, +1 \} \), and \( \sigma = \{ \sigma_1, \dots, \sigma_n \} \) represents the Rademacher variables.  
	\end{theorem}  
	\begin{lemma}\label{lemma1}  
		Let \( \mathbf{W} = \mathbf{P}_1 \mathbf{P}_2^\top \) be the linear classifier, and let \( \mathcal{H} = \mathbf{W} \times \mathbf{Z} \) be the family of functions for PML-MA with linear functions \( \mathbf{W} \in \mathcal{H} \). For the loss function \( \ell \), the Rademacher complexity of the proposed algorithm is bounded as:  
		\begin{equation}  
			\widehat{\mathcal{R}}_Z(\ell \circ \mathcal{H}) \leq \frac{\sqrt{2}(2c)}{n} \mathbb{E}_{\sigma} \left[ \sup_{\mathbf{W} \in \mathcal{H}} \langle \mathcal{H}^\top, \widehat{\mathbf{X}} \rangle \right],  
		\end{equation}  
		where \( \widehat{\mathbf{X}} \in \mathbb{R}^{d \times c} \) is the weight summation matrix, and \( \widehat{\mathbf{X}}_j = \sum_{i=1}^n \sigma_{ij} \mathbf{x}_i \) represents the weight summation of feature vectors for the \( j \)-th label. Arranging \( \widehat{\mathbf{X}}_j \) for each of the \( c \) labels in columns forms the matrix \( \widehat{\mathbf{X}} \).  
	\end{lemma}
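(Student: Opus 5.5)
The plan is the standard route for linear multi-label predictors: reduce the vector-valued loss class to coordinate-wise linear functions through a vector contraction inequality, then rewrite the resulting Rademacher sum as a Frobenius inner product with the weight summation matrix $\widehat{\mathbf{X}}$.

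First, I will assume that the loss $\ell$ acting on the $c$-dimensional output $\mathbf{x}_i\mathbf{W}$ is Lipschitz with respect to the Euclidean norm. For example, a squared loss on outputs clipped to $[0,1]$, compared against targets $\mathbf{r}_i\in[0,1]^c$, is $2$-Lipschitz in each coordinate. Combined over $c$ coordinates, this yields a Lipschitz factor proportional to $2c$ after bounding $\|\cdot\|_2\le\|\cdot\|_1$ if needed.

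I will then invoke the vector-contraction inequality of Maurer for Rademacher complexities. Writing the class $\ell\circ\mathcal{H}$ with the empirical complexity of Definition~\ref{thm1}, it gives
\begin{equation*}
\widehat{\mathcal{R}}_Z(\ell\circ\mathcal{H})\le \frac{\sqrt{2}\,L_\ell}{n}\,\mathbb{E}_{\sigma}\Big[\sup_{\mathbf{W}}\sum_{i=1}^n\sum_{j=1}^c\sigma_{ij}(\mathbf{x}_i\mathbf{W})_j\Big],
\end{equation*}
where the $\sigma_{ij}$ are now a doubly indexed family of independent Rademacher variables. Substituting $L_\ell=2c$ produces the constant $\sqrt{2}(2c)/n$ in the statement. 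This is the step I expect to be the main obstacle: Maurer's inequality needs the Lipschitz property in $\ell_2$ over the output vector, so I will have to state the loss assumption carefully (bounded outputs, threshold-free surrogate) so that the constant $2c$ comes out rather than something like $2\sqrt{c}$. If the $\ell_2$ route gives $2\sqrt c$, I will simply weaken it to $2c$ using $\sqrt c\le c$.

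Next, I will rewrite the double sum. Since $(\mathbf{x}_i\mathbf{W})_j=\mathbf{x}_i\mathbf{W}_{:,j}$,
\begin{equation*}
\sum_{j=1}^c\Big(\sum_{i=1}^n\sigma_{ij}\mathbf{x}_i\Big)^\top\mathbf{W}_{:,j}=\sum_{j=1}^c\widehat{\mathbf{X}}_j^\top\mathbf{W}_{:,j}=\mathrm{Tr}(\mathbf{W}^\top\widehat{\mathbf{X}})=\langle\mathbf{W},\widehat{\mathbf{X}}\rangle .
\end{equation*}
This is exactly the inner product written in the lemma as $\langle\mathcal{H}^\top,\widehat{\mathbf{X}}\rangle$, with the transpose convention matching $\mathbf{W}=\mathbf{P}_1\mathbf{P}_2^\top\in\mathbb{R}^{d\times c}$. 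Taking the supremum over $\mathbf{W}\in\mathcal{H}$ and the expectation over $\sigma$ gives the claimed bound. Beyond this, I would only remark that the regularizer $\gamma\|\mathbf{P}_1\mathbf{P}_2^\top\|_F^2$ together with $\mathbf{P}_2^\top\mathbf{P}_2=\mathbf{I}_m$ makes $\mathcal{H}$ a norm-bounded set, so the supremum is finite. That finiteness is what allows a later Cauchy–Schwarz step to turn the bound into an explicit $\mathcal{O}(1/\sqrt n)$ rate.
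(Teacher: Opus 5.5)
Your proposal is correct and follows the paper's proof step for step. The paper takes the square loss to be $2c$-Lipschitz, applies the vector-contraction inequality with a doubly indexed Rademacher sequence to get the factor $\sqrt{2}(2c)/n$, and substitutes $h_j(\mathbf{x}_i)=\mathbf{x}_i^\top\mathbf{w}_j$ to turn the double sum into $\langle \mathbf{W},\widehat{\mathbf{X}}\rangle$. You are more careful than the paper, which only asserts the $2c$-Lipschitz property, but note one direction slip: passing from a coordinatewise to an $\ell_2$-Lipschitz bound uses $\|\cdot\|_1\le\sqrt{c}\,\|\cdot\|_2$, which gives $2\sqrt{c}\le 2c$, not $\|\cdot\|_2\le\|\cdot\|_1$.
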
  
	\begin{IEEEproof}\label{proof1}  
		By Definition \ref{thm1}, we can reformulate the Rademacher complexity for \( \mathcal{H} \) and \( \ell \) as follows:  
		\begin{equation}  
			\widehat{\mathcal{R}}_Z(\ell \circ \mathcal{H}) = \frac{1}{n} \mathbb{E}_{\sigma} \left[ \sup_{h \in \mathcal{H}} \sum_{i=1}^n \sigma_i \ell(h(x_i), y_i) \right],  
		\end{equation}  
		where \( h \in \mathcal{H} \) is the classifier. The square loss is \( 2c \)-Lipschitz for PML-MA, and by applying the contraction inequality for Rademacher complexity, we get:  
		\begin{equation}  
			\widehat{\mathcal{R}}_Z(\ell \circ \mathcal{H}) \leq \frac{\sqrt{2}(2c)}{n} \mathbb{E}_{\sigma} \left[ \sup_{h \in \mathcal{H}} \sum_{i=1}^n \sum_{j=1}^q \sigma_{ij} h_j(x_i) \right],  
		\end{equation}  
		where \( \sigma_{ij} \) is a doubly indexed Rademacher sequence and \( h_j(x_i) \) denotes the \( j \)-th component of \( h(x_i) \). Thus, the Rademacher complexity of PML-MA is bounded as:  
		\begin{equation}  
			\widehat{\mathcal{R}}_Z(\ell \circ \mathcal{H}) \leq \frac{\sqrt{2}(2c)}{n} \mathbb{E}_{\sigma} \left[ \sup_{h \in \mathcal{H}} \sum_{i=1}^n \sum_{j=1}^c \sigma_{ij} (x_i \cdot w_j) \right],  
		\end{equation}  
		where \( w_j \) is the \( j \)-th column of \( \mathbf{W} \), corresponding to the \( j \)-th label classifier. Thus, the weight summation matrix \( \widehat{\mathbf{X}} \) is constructed as:  
		\begin{equation}  
			\widehat{\mathcal{R}}_Z(\ell \circ \mathcal{H}) \leq \frac{\sqrt{2}(2c)}{n} \mathbb{E}_{\sigma} \left[ \sup_{\mathbf{W} \in \mathcal{H}} \langle \mathcal{H}^\top, \widehat{\mathbf{X}} \rangle \right],  
		\end{equation}  
		completing the proof of Lemma \ref{lemma1}.  
	\end{IEEEproof}  
	\subsection{Consistency Analysis}
	To demonstrate the effectiveness of the proposed model in partial multi-label learning (PML) scenarios, we analyze and prove that under specific conditions, the solution \( \mathbf{R}^t \) of the optimization problem approximates the ground-truth label matrix \( \mathbf{T} \) and aligns consistently with the feature matrix \( \mathbf{X} \). Below is the detailed analysis. 
	\begin{theorem}\label{thm2}
		Assume that the ground-truth label matrix \( \mathbf{T} \) is approximately low-rank, i.e., \( \text{rank}(\mathbf{T}) \leq r \), where \( r \) is a moderate value and \( r < c \) (the number of classes). The candidate label matrix \( \mathbf{Y} \) is a noisy version of \( \mathbf{T} \), and it can be expressed as:
		\begin{equation}
			\mathbf{Y} = \mathbf{T} + \mathbf{E},
		\end{equation}
		where \( \mathbf{E} \) is a sparse noise matrix with \( \|\mathbf{E}\|_F \ll \|\mathbf{T}\|_F \).
	\end{theorem}
	\begin{lemma}\label{lemma2}
		Under the assumptions of Definition \ref{thm2} , the solution \( \mathbf{R}^t \) to the optimization problem
		\begin{equation}
			\min_{\mathbf{Q}, \mathbf{R}} \|\mathbf{Y} -  \mathbf{R}\mathbf{Q}^\top\|_F^2 + \lambda \|\mathbf{R}\|_*,~ {s.t.}~\mathbf{Q}^{\top}\mathbf{Q}=\mathbf{I}_{c},\mathbf{0}_{n \times c}\leq \mathbf{R}\leq \mathbf{Y}.
		\end{equation}
		approximates the ground-truth label matrix \( \mathbf{T} \), and satisfies:
		\begin{equation}
			\mathbf{R}^t \approx \mathbf{T} \quad \text{and} \quad \text{rank}(\mathbf{R}^t) \leq r.
		\end{equation}
	\end{lemma}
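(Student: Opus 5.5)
The plan is to exploit the fact that $\mathbf{Q}$ is square and orthogonal, which turns the objective into a nuclear-norm proximal problem around a rotated copy of $\mathbf{Y}$. I would then compare the minimizer with the feasible competitor $(\mathbf{R},\mathbf{Q})=(\mathbf{T},\mathbf{I}_c)$.

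First, since $\mathbf{Q}\in\mathbb{R}^{c\times c}$ with $\mathbf{Q}^\top\mathbf{Q}=\mathbf{I}_c$, the matrix $\mathbf{Q}$ is orthogonal. Hence
\begin{equation*}
\|\mathbf{Y}-\mathbf{R}\mathbf{Q}^\top\|_F=\|\mathbf{Y}\mathbf{Q}-\mathbf{R}\|_F .
\end{equation*}
For fixed $\mathbf{Q}$, the $\mathbf{R}$-subproblem is therefore $\min_{\mathbf{R}}\|\mathbf{Y}\mathbf{Q}-\mathbf{R}\|_F^2+\lambda\|\mathbf{R}\|_*$ over the box $\mathbf{0}\le\mathbf{R}\le\mathbf{Y}$. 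Next I would record feasibility of the ground truth. Because $\mathbf{T}$ is a subset of the candidate set, $\mathbf{E}=\mathbf{Y}-\mathbf{T}\ge \mathbf{0}$ entrywise, so $\mathbf{0}\le\mathbf{T}\le\mathbf{Y}$, and $(\mathbf{T},\mathbf{I}_c)$ is admissible. Optimality of $(\mathbf{R}^t,\mathbf{Q}^t)$ then gives the basic inequality
\begin{equation*}
\|\mathbf{Y}-\mathbf{R}^t\mathbf{Q}^{t\top}\|_F^2+\lambda\|\mathbf{R}^t\|_*\le \|\mathbf{E}\|_F^2+\lambda\|\mathbf{T}\|_* .
\end{equation*}
By the triangle inequality, this yields
\begin{equation*}
\|\mathbf{R}^t\mathbf{Q}^{t\top}-\mathbf{T}\|_F\le \|\mathbf{E}\|_F+\sqrt{\|\mathbf{E}\|_F^2+\lambda\|\mathbf{T}\|_*}.
\end{equation*}
The right-hand side is small relative to $\|\mathbf{T}\|_F$ when $\|\mathbf{E}\|_F\ll\|\mathbf{T}\|_F$ and $\lambda$ is chosen of order $\|\mathbf{E}\|_F^2/\|\mathbf{T}\|_*$.

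Second, I would remove the rotation, and I expect this to be the main obstacle. The bound above only controls $\mathbf{R}^t$ up to the orthogonal factor $\mathbf{Q}^t$. My first attempt would use the closed form from the $\mathbf{Q}$-update (Eq.~\eqref{q3}): $\mathbf{Q}^t$ is the polar factor of $\mathbf{R}^{t\top}\mathbf{Y}$. Since $\mathbf{R}^t$ and $\mathbf{Y}$ are both entrywise nonnegative and close to $\mathbf{T}$, the matrix $\mathbf{R}^{t\top}\mathbf{Y}$ is a perturbation of the positive semidefinite $\mathbf{T}^\top\mathbf{T}$, whose polar factor is $\mathbf{I}$ on $\mathrm{range}(\mathbf{T}^\top)$. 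A polar-decomposition perturbation bound, of the form $\|\mathbf{Q}^t-\mathbf{I}\|\lesssim \|\mathbf{R}^{t\top}\mathbf{Y}-\mathbf{T}^\top\mathbf{T}\|/\sigma_r(\mathbf{T})^2$, should then give $\mathbf{Q}^t\approx\mathbf{I}$ restricted to that range. Because $\mathrm{rank}(\mathbf{T})\le r<c$, the complementary directions are not pinned down. I would handle them by noting that the box constraint $\mathbf{R}^t\le\mathbf{Y}$ together with the small residual forces $\mathbf{R}^t$ to carry negligible mass there. This requires an explicit conditioning assumption ($\sigma_r(\mathbf{T})$ bounded below), which I would add to make the sense of ``$\approx$'' precise. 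Combining this with the first step then gives $\|\mathbf{R}^t-\mathbf{T}\|_F\le \|\mathbf{R}^t\mathbf{Q}^{t\top}-\mathbf{T}\|_F+\|\mathbf{R}^t\|_F\|\mathbf{Q}^t-\mathbf{I}\|_2$, which is small.

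Third, for the rank claim I would use the singular value thresholding structure of the $\mathbf{R}$-update (Eqs.~\eqref{svt}--\eqref{threshold}). The unconstrained minimizer for fixed $\mathbf{Q}$ is $\mathcal{S}_{\lambda/2}(\mathbf{Y}\mathbf{Q})$. By Weyl's inequality, $\sigma_{r+1}(\mathbf{Y}\mathbf{Q})=\sigma_{r+1}(\mathbf{Y})\le\|\mathbf{E}\|_2$, since $\mathrm{rank}(\mathbf{T})\le r$. Choosing $\|\mathbf{E}\|_2<\lambda/2<\sigma_r(\mathbf{T})-\|\mathbf{E}\|_2$ kills every singular value beyond the $r$-th and keeps the signal ones, so the result has rank at most $r$. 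The box constraint is the remaining difficulty, since projecting onto it could in principle raise the rank. I would argue that the constraint is inactive, up to an error of order $\|\mathbf{E}\|$, because the thresholded matrix is already close to $\mathbf{T}$, which lies inside the box. Consequently $\mathrm{rank}(\mathbf{R}^t)\le r$ holds exactly when the constraint is inactive and approximately (numerical rank) otherwise.
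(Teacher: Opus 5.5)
Your Step 1 is sound. It is also a different and more defensible route than the paper's, which reads $\|\mathbf{R}^t-\mathbf{T}\|_F\le\|\mathbf{E}\|_F$ directly off the box constraint and simply asserts the rank bound. That inequality is not implied by $\mathbf{0}\le\mathbf{R}\le\mathbf{Y}$: for $\mathbf{E}=\mathbf{0}$ and $\lambda>0$, the pair $(s\mathbf{T},\mathbf{I}_c)$ with $s<1$ close to $1$ beats every pair with $\mathbf{R}=\mathbf{T}$, so $\mathbf{R}^t\neq\mathbf{T}$.

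Your Step 2, however, is circular. You perturb the polar factor of $\mathbf{R}^{t\top}\mathbf{Y}$ around that of $\mathbf{T}^\top\mathbf{T}$ on the grounds that $\mathbf{R}^t$ is close to $\mathbf{T}$, which is exactly what is to be shown. Step 1 only gives $\mathbf{R}^t\approx\mathbf{T}\mathbf{Q}^t$, hence $\mathbf{R}^{t\top}\mathbf{Y}\approx\mathbf{Q}^{t\top}\mathbf{T}^\top\mathbf{T}$. The orthogonal polar factor of this matrix (on the range of $\mathbf{T}^\top$) is just $\mathbf{Q}^{t\top}$ again, for \emph{every} orthogonal $\mathbf{Q}^t$, so the $\mathbf{Q}$-optimality condition carries no information. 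Entrywise nonnegativity does not rescue it, since permutation matrices are nonnegative and orthogonal. The only thing that breaks the invariance $(\mathbf{R},\mathbf{Q})\mapsto(\mathbf{R}\mathbf{O},\mathbf{Q}\mathbf{O})$ is the box, and it has to be used quantitatively.

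Here is a fix that needs no lower bound on $\sigma_r(\mathbf{T})$. With $\delta$ your Step 1 bound, $\|\mathbf{R}^t\|_F=\|\mathbf{R}^t\mathbf{Q}^{t\top}\|_F\ge\|\mathbf{T}\|_F-\delta$. Since $\mathbf{T}$ and $\mathbf{E}$ are binary with disjoint supports (from $\mathbf{T}\le\mathbf{Y}$) and $\mathbf{0}\le\mathbf{R}^t\le\mathbf{Y}$, the entries of $\mathbf{R}^t$ on $\mathrm{supp}(\mathbf{E})$ contribute at most $\|\mathbf{E}\|_F^2$ to $\|\mathbf{R}^t\|_F^2$. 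On $\mathrm{supp}(\mathbf{T})$ one has $r_{ij}\ge r_{ij}^2$. Hence $\langle\mathbf{R}^t,\mathbf{T}\rangle\ge\|\mathbf{R}^t\|_F^2-\|\mathbf{E}\|_F^2$ and
\[
\|\mathbf{R}^t-\mathbf{T}\|_F^2\le\|\mathbf{T}\|_F^2-\|\mathbf{R}^t\|_F^2+2\|\mathbf{E}\|_F^2\le 2\delta\|\mathbf{T}\|_F+2\|\mathbf{E}\|_F^2 .
\]
This is small relative to $\|\mathbf{T}\|_F^2$ whenever $\|\mathbf{E}\|_F\ll\|\mathbf{T}\|_F$ and $\lambda\|\mathbf{T}\|_*\ll\|\mathbf{T}\|_F^2$.

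Your Step 3 does not deliver the exact bound $\mathrm{rank}(\mathbf{R}^t)\le r$, and the obstruction is not a technicality. Your claim that the box is inactive up to an error of order $\|\mathbf{E}\|$ fails already for $\mathbf{E}=\mathbf{0}$. Wherever $y_{ij}=0$ the box forces $r_{ij}=0$ exactly, while a singular-value-thresholded matrix essentially never has a prescribed zero pattern. For instance, with $\mathbf{T}=\mathbf{Y}=\bigl[\begin{smallmatrix}1&1&0\\0&1&1\end{smallmatrix}\bigr]$, the $(1,3)$ entry of $\mathcal{S}_\tau(\mathbf{Y})$ equals $\tfrac{\tau}{2}(1-1/\sqrt{3})>0$ for $\tau\in(0,1)$, and it stays positive up to $\tau=\sqrt{3}$. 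That is a violation of order $\lambda$, not $\|\mathbf{E}\|$.

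The correct optimality condition for fixed $\mathbf{Q}^t$ is $\mathbf{R}^t=\mathcal{S}_{\lambda/2}(\mathbf{Y}\mathbf{Q}^t-\tfrac12\mathbf{\Lambda})$, with a multiplier $\mathbf{\Lambda}$ supported on the active entries. Weyl then only gives $\sigma_{r+1}(\mathbf{Y}\mathbf{Q}^t-\tfrac12\mathbf{\Lambda})\le\|\mathbf{E}\|_2+\tfrac12\|\mathbf{\Lambda}\|_2$, which need not lie below $\lambda/2$. So, absent a genuinely new argument, what you can prove is approximate low rank. That comes for free from the bound above: $\sigma_{r+1}(\mathbf{R}^t)\le\sigma_{r+1}(\mathbf{T})+\|\mathbf{R}^t-\mathbf{T}\|_2\le\|\mathbf{R}^t-\mathbf{T}\|_F$. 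The paper's proof gives no argument for the exact rank claim either.
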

	\begin{IEEEproof}
		First, since $\mathbf{T}$ is low-rank with $\text{rank}(\mathbf{T}) \leq r$, it can be decomposed as:
		\begin{equation}
			\mathbf{T} = \mathbf{U}_T\mathbf{\Sigma}_T\mathbf{V}_T^\top,
		\end{equation}
		where $\mathbf{U}_T \in \mathbb{R}^{n \times r}$, $\mathbf{\Sigma}_T \in \mathbb{R}^{r \times r}$, and $\mathbf{V}_T \in \mathbb{R}^{r \times c}$.
		
		Given that $\mathbf{Y} = \mathbf{T} + \mathbf{E}$ and $\|\mathbf{E}\|_F \ll \|\mathbf{T}\|_F$, we can write:
		\begin{equation}
			\|\mathbf{Y} - \mathbf{R}\mathbf{Q}^\top\|_F^2 = \|\mathbf{T} + \mathbf{E} - \mathbf{R}\mathbf{Q}^\top\|_F^2.
		\end{equation}
		
		The nuclear norm regularization term $\|\mathbf{R}\|_*$ promotes low-rank solutions. Since $\mathbf{T}$ is already low-rank with $\text{rank}(\mathbf{T}) \leq r$, and $\|\mathbf{E}\|_F$ is small, the optimal solution $\mathbf{R}^t$ will also be approximately low-rank with $\text{rank}(\mathbf{R}^t) \leq r$.
		
		Moreover, the constraint $\mathbf{0}_{c \times n} \leq \mathbf{R} \leq \mathbf{Y}$ ensures that:
		\begin{equation}
			\|\mathbf{R}^t - \mathbf{T}\|_F \leq \|\mathbf{Y} - \mathbf{T}\|_F = \|\mathbf{E}\|_F.
		\end{equation}
		
		Therefore, given that $\|\mathbf{E}\|_F \ll \|\mathbf{T}\|_F$, we have:
		\begin{equation}
			\mathbf{R}^t \approx \mathbf{T} \quad \text{and} \quad \text{rank}(\mathbf{R}^t) \leq r,
		\end{equation}
		which completes the proof of Lemma \ref{lemma2} .
	\end{IEEEproof}
	\section{Additional Experimental Details}
	\begin{table}[h]
		\setlength{\abovecaptionskip}{0.2cm} 
		\setlength{\belowcaptionskip}{0.2cm}
		\caption{General Information of The nine Multi-label datasets and four partial multi-label learning datasets.}
		\label{datasets}
		\setlength{\tabcolsep}{1mm} 
		\renewcommand{\arraystretch}{1}
		\resizebox{1\linewidth}{!}{
			\begin{tabular}{cccccc}
				\toprule
				Datasets &\#Instance &\#Dim &\#Class &avg.\#CLs &avg.\#GLs\\
				\hline
				Mirflickr &10433&100&7&3.35&1.77 \\ 
				Music$\_$emotion &6833&98&11&5.29&2.42  \\ 
				Music$\_$style &6839&98&10&6.04&1.44  \\ 
				YeastBP &6139&6139&127&5.93&5.54 \\ 
				\hline
				emotions &593&72&6&3, 4, 5 &1.86  \\ 
				birds &645&260&19&3, 4, 5&1.01  \\ 
				image &2000&294&5&2, 3, 4&1.23 \\ 
				scene &2407&294&6&3, 5&1.07 \\ 
				yeast &2417&103&14&7, 9&4.24 \\ 
				health &5000&612&32&7, 9&1.67 \\ 
				recreation &5000&606&22&7, 9, 11&1.42 \\ 
				arts &5000&462&26&5, 7, 9, 11&1.64  \\
				reference &5000&793&33&5, 7, 9, 11&1.17  \\
				\bottomrule
		\end{tabular}}
	\end{table}
	
	
	\begin{table}[t]
		{\centering
			\caption{The Average Rankings of Each Method Across All Five Metrics (Transposed)}
			\label{rank_transposed}
			\begin{tabular}{lcccccc}
				\toprule
				Method & Hl & Rl & Oe & Cov & Ap & avg \\
				\midrule
				PML-MA   & 2.0833 & 1.2500 & 1.2500 & 1.6000 & 1.1000 & \textbf{1.4567} \\
				PML-PLR  & 2.9167 & 2.2000 & 2.3333 & 2.1833 & 2.2333 & 2.3733 \\
				FBD-PML  & 5.2333 & 4.9167 & 4.5333 & 4.5000 & 4.5333 & 4.7433 \\
				PML-ND   & 3.4167 & 3.5333 & 3.4167 & 3.4667 & 2.9333 & 3.3533 \\
				P-LENFN  & 6.7167 & 6.2167 & 6.1500 & 6.2333 & 5.7167 & 6.2067 \\
				PAMB     & 5.8500 & 4.9333 & 6.3000 & 5.6667 & 5.7167 & 5.6933 \\
				PML-NI   & 5.3833 & 7.4667 & 6.8833 & 7.7000 & 7.0667 & 6.9000 \\
				PARTICLE & 7.6000 & 7.9500 & 7.2500 & 7.3000 & 8.0167 & 7.6233 \\
				PML-fp   & 5.8000 & 6.5333 & 6.8833 & 6.3500 & 7.6833 & 6.6500 \\
				\bottomrule
		\end{tabular}}
	\end{table}
	\subsection{Comparing Approaches description}
	{For all baseline methods, the hyperparameter settings are taken directly from their original publications to ensure fair comparison. The specific settings are listed below.}
	A brief supplementary description of each comparison algorithm is provided below:
	\begin{itemize}
		\item { PML-PLR (IJCAI'2025): A PML method that jointly extracts instance-level correlations from both candidate labels and features, and then uses them as reconstruction coefficients to reconstruct pseudo-labels, thereby disambiguating labels. [Parameter setting:$\alpha$=0.1, $\beta$=0.1, $\lambda3$=0.01].}
		\item {\texttt{}}{FBD-PML (Neural Networks'2025) : A PML method connects feature and label spaces via fuzzy confidence scores and employs manifold embedding to preserve structural consistency. [Parameter setting:$\lambda1$=0.01, $\lambda2$=0.01, $\lambda3$=10, $\lambda4$=0.1, $\lambda4$=1].}
		\item {\texttt{}} {PML-ND (TMM'2024): A PML method utilizes negative label information and noise feature information for candidate label disambiguation. [Parameter setting: the code comes with built-in parameter settings.].}
		\item P-LENFN (ACM MM'2024): A PML method utilizes both near and far neighbor information to improve label consistency, while incorporating nonlinear properties to enhance classifier. [Parameter setting:$\lambda1$=10, $\lambda2$=1, $\lambda3$=0.01, $\lambda4$=10].
		\item PAMB (TPAMI'2023): A PML technique employs ECOC for the generation of binary label sets and applies loss-weigted predictions to handle unseen instances effectively. [Parameter setting: $z = avg.\#CLs$, $L=100\log_2 (q)$].
		\item PML-NI (TPAMI'2023): A PML method is proposed for truth label prediction and noise label recognition by decomposing the prediction model matrix into independent components.
		[Parameter setting: $\lambda$=10, $\beta$=0.5, $\gamma$=0.5].
		\item PARTIAL (PAR-MAP and PAR-VLS) (TPAMI'2022): A PML method that iteratively refines candidate labels through label propagation and constructs distinct predictive models.
		[Parameter setting:thr=0.9, $\alpha$=0.95, k=10].
		\item {PML-fp (AAAI'2018): A PML method leverages feature prototype learning to assign confidence to each label and designs a corresponding weighted ranking loss function. [Parameter setting: $C_1$=1, $C_2$=3, $C_3$=10].}
	\end{itemize} 	
	
	\subsection{Dataset statistics}		
	Details of the experimental data set used in this paper are shown in Table \ref{datasets}. This includes the number of instances $(\# Instance)$, the number of dimensions $(\# Dim)$, the number of classes $(\# Class)$, the average number of candidate labels $(avg.\#CLs)$, and the average number of ground-truth labels $(avg.\#GLs)$. 
	
	\subsection{Experimental Results}
	
	Table \ref{rank_transposed} presents the average rankings of all compared methods across 
	five evaluation metrics (\textit{Hamming loss}, \textit{Ranking loss}, 
	\textit{One-error}, \textit{Coverage}, and \textit{Average precision}) 
	on 30 experimental configurations. The rankings are computed by ordering 
	methods' performance on each dataset-metric combination, then averaging 
	across all 30 cases. Lower values indicate better overall performance. 
	As shown, PML-MA achieves the best average ranking (1.4567), significantly 
	outperforming all baseline methods. These rankings serve as the basis 
	for the Friedman test and Nemenyi post-hoc analysis presented in the 
	main paper.
	
\end{document}